\def\eqref#1{equation~\ref{#1}}
\def\1{\bm{1}}
\DeclareMathAlphabet{\mathsfit}{\encodingdefault}{\sfdefault}{m}{sl}
\SetMathAlphabet{\mathsfit}{bold}{\encodingdefault}{\sfdefault}{bx}{n}
\newcommand{\E}{\mathbb{E}}
\newcommand{\R}{\mathbb{R}}
\newcounter{cnt}
\newtheorem{theorem}{Theorem}    % the theorem env
\newtheorem{lemma}[theorem]{Lemma}
\newtheorem{corollary}[theorem]{Corollary}
\newtheorem{assumption}[theorem]{Assumption}
\let\c@theorem\c@cnt             % share the same underlying counter
\newcommand{\diff}{\mathrm{d}}
\title{DistillKac: Few-Step Image Generation via Damped Wave Equations}
\author{Weiqiao Han\\
MIT
\And
Chenlin Meng\\
Stanford
\And
Christopher D. Manning\\
Stanford
\And
Stefano Ermon\\
Stanford
}
\begin{document}

\maketitle

\begin{abstract}
We present DistillKac, a fast image generator that uses the damped wave equation and its stochastic Kac representation to move probability mass at finite speed.
In contrast to diffusion models whose reverse time velocities can become stiff and implicitly allow unbounded propagation speed, Kac dynamics enforce finite speed transport and yield globally bounded kinetic energy.
Building on this structure, we introduce classifier-free guidance in velocity space that preserves square integrability under mild conditions.
We then propose endpoint only distillation that trains a student to match a frozen teacher over long intervals.
We prove a stability result that promotes supervision at the endpoints to closeness along the entire path.
Experiments demonstrate DistillKac delivers high quality samples with very few function evaluations while retaining the numerical stability benefits of finite speed probability flows.
\end{abstract}

\section{Introduction}
Diffusion models have catalyzed an enormous research wave.
Since the seminal Denoising Diffusion Probabilistic Models (DDPM) paper, there are now tens of thousands of diffusion-related publications, and the DDPM work alone has accrued well over twenty-five thousand citations \citep{ho2020denoising, sohl2015deep}. 
Conceptually, diffusion models couple a stochastic forward noising process with a reverse-time ordinary differential equation (ODE)/stochastic differential equation (SDE) whose density evolution is governed by the Fokker–Planck equation, a linear second-order partial differential equation (PDE) \citep{songscore}. 
Learning consists of estimating time-indexed scores/velocities and integrating them for sampling \citep{ho2020denoising, song2020improved, songdenoising}
Recently, a growing body of work explores probability flow backbones:
deterministic ODE flows trained via flow matching and its optimal transport variants \citep{lipmanflow, tongimproving}, rectified flows \citep{liu2022flow}, and unified stochastic interpolants that bridge flows and diffusion \citep{albergo2023stochastic}. 

Yet the PDE lens on generative modeling need not be limited to Fokker–Planck. 
Other classical PDEs offer different structural advantages but remain comparatively underexplored.
For example, Poisson flow models recast generation through electrostatics-inspired fields \citep{xu2022poisson,xu2023pfgm++}.
Most relevant to our work, \cite{duong2025telegrapher} replaces the Fokker–Planck equation with the telegrapher equation (1-dimensional damped wave equation) and its stochastic Kac representation, proving finite-speed probability flows that are Lipschitz in Wasserstein distance with globally bounded velocity norms.
Crucially, telegrapher/Kac dynamics enforce a finite propagation speed: starting from localized mass, the distribution remains inside a causal cone. 
In contrast, diffusion spreads mass instantaneously with effectively infinite propagation speed, which also manifests as stiff, rapidly growing reverse-time velocities near terminal time \citep{yang2023lipschitz}.
Indeed, diffusion arises as a limit of the damped wave equation as the damping and propagation speed tend to infinity.
This structural gap, finite speed versus infinite-speed propagation, will be central to our modeling and algorithms.
% a system is stiff when explicit integrators (like Euler/Heun/AB2) become unstable unless you take extremely small step sizes. The step size is limited by stability, not by the smoothness/accuracy you’d otherwise need.

In this paper, we pivot to the damped wave equation and its stochastic Kac representation, which induces finite-velocity probability flows. 
Compared to diffusion ODEs/SDEs, whose velocity norms can blow up near terminal time, Kac dynamics yield globally bounded kinetic energy and Lipschitz regularity in Wasserstein space. 
Intuitively, the finite-speed cap $c$ acts like a built-in stability constraint: trajectories cannot accelerate without bound, characteristic fronts move no faster than $c$, and late-time integration is less stiff.
Building on this structure, we introduce guided Kac flows for conditional generation and distilled Kac flows for few-step sampling: 
(i) we adapt classifier-free guidance \citep{ho2022classifier} directly in velocity space while preserving bounded-energy guarantees, and (ii) we develop an endpoint-only distillation scheme with a provable endpoint-to-trajectory stability bound. 
Empirically, these properties support stable few-step samplers with strong sample quality. 
Theoretically, they clarify why endpoint matching suffices under finite-speed Kac dynamics.

\begin{table}[t]
\centering
\setlength{\tabcolsep}{6pt}
\renewcommand{\arraystretch}{1.2}
\begin{tabular}{p{0.29\linewidth} p{0.29\linewidth} p{0.29\linewidth}}
\hline
\textbf{Parabolic PDE} & \textbf{Elliptic PDE} & \textbf{Hyperbolic PDE} \\
\hline
Fokker–Planck equation & Poisson equation & Damped wave equation \\
\hline
Diffusion models & Poisson flow models &  Kac flow models \\
\hline 
\citep{sohl2015deep,ho2020denoising} and tens of thousands of other diffusion papers &
\citep{xu2022poisson,xu2023pfgm++} &
\citep{duong2025telegrapher} and this work \\
\hline
\end{tabular}
\caption{Three PDE lenses for generative modeling: representative model families and citations. Our work belongs to the hyperbolic family, shown in the third column.}
\label{tab:pde-landscape}
\end{table}

\section{Theoretical Foundations}
\subsection{Partial Differential Equations}
The time evolution of the probability density of the diffusion process is governed by the \emph{Fokker-Planck equation}
\begin{align}
    \partial_t p(t, x) = - \nabla \cdot (p\mu)(t, x) + \sum_{i,j} \partial_{x_i} \partial_{x_j} (D_{ij}(t, x) p(t, x))
\end{align}
where $t\in \R$ is time, $x\in \R^d$ is state, $\mu(t, x)$ is the drift vector, and $D(t, x) =\frac{1}{2} \sigma \sigma^\top$ is the diffusion tensor.
The \emph{heat equation} is a special case of the Fokker-Planck equation.

While the Fokker-Planck equation is a linear, second-order parabolic PDE, there are other important classes of linear second-order PDEs, such as elliptic and hyperbolic equations.
In particular, the wave equation and its damped variant are canonical examples of hyperbolic PDEs.
The \emph{damped wave equation} takes the general form
\begin{align}\label{eq:damped_wave_eq}
    \partial_{tt} u(t, x) + \xi \partial_t u(t, x) = c^2 \Delta u(t, x)
\end{align}
where $\xi > 0$ is a constant and $c$ is the speed of the wave front.
The \emph{telegraph(er) equation} is a 1-dimensional damped wave equation.

\subsection{Kac Process and Random Flights}
Just as the diffusion process is associated with the Fokker-Planck equation, the damped wave equation also corresponds to a stochastic process. 

Define a stochastic process $\{X(t)\}_{t\geq 0}$ in $\R^d, d \geq 1,$ as follows:
\begin{enumerate}
    \item Initial condition. The particle starts at the origin $X(0) = 0 \in \R^d$.
    \item Velocity and speed. The particle moves with constant speed $c > 0$. At any time, the velocity is $V(t) = cU(t)$, where $U(t)$ is a random vector on the unit sphere $\mathbb{S}^{d-1} = \{u \in \R^d: \|u\| = 1\}$.
    \item Direction changes. The direction process $\{U(t)\}$ changes at the jump times of a homogeneous Poisson process $\{N(t)\}_{t\geq 0}$ of rate $\lambda > 0$. That is, if the jump times are $0 = s_0 < s_1 < s_2 < \cdots < s_n$, then:
    \begin{itemize}
        \item On each interval $[s_{j - 1}, s_j)$, the velocity is constant: $V(t) = cU_j$. 
        \item The random directions $\{U_j\}$ are i.i.d. with uniform distribution on the unit sphere $\mathbb{S}^{d-1}$.
    \end{itemize}
\end{enumerate}
The position of the particle at time $t$ is obtained by integrating the velocity
\begin{align}
    X(t) = \int_0^t V(s) ds  = c \sum_{j = 1}^{N(t)} (s_j - s_{j - 1}) U_j,
\end{align}
where $s_0 = 0$ and $s_{N(t)} = t$.

When $d = 1$, $\mathbb{S}^0 = \{ \pm 1\}$, and the particle flips directions between left and right. The $d = 1$ case is usually called \emph{telegraph process} or \emph{Kac process} \citep{goldstein1951diffusion, kac1974stochastic}. When $d \geq 2$, the process is called a \emph{random flight} \citep{orsingher2007random}.

\begin{theorem} \citep{orsingher2007random} \label{thm:state_distr}
    For $d = 1, 2$, and $4$, the distribution of the state $X(t)$ can be obtained analytically, while for any other positive integer $d$ there is no explicit formula.
\end{theorem}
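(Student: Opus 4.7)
The plan is to condition on $N(t)=n$ and reduce the full distribution to a Poisson mixture of simpler conditional laws. Given $N(t)=n$, the jump times are distributed as order statistics of $n$ i.i.d.\ uniform variables on $[0,t]$, so the sojourn lengths $\tau_j=s_j-s_{j-1}$ (with $s_{n+1}:=t$) are uniform on the simplex $\{\tau_j\geq 0,\ \sum_j\tau_j=t\}$ and independent of the directions $U_1,\dots,U_{n+1}\stackrel{\mathrm{iid}}{\sim}\mathrm{Unif}(\mathbb{S}^{d-1})$. Hence $X(t)\mid\{N(t)=n\} \stackrel{d}{=} c\sum_j \tau_j U_j$, and the unconditional law follows by mixing against the Poisson weights $e^{-\lambda t}(\lambda t)^n/n!$.

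First I would compute the characteristic function. By rotational invariance, for a single leg, $\E[\exp(i\xi\cdot c\tau U)]=\Omega_d(c\tau\|\xi\|)$ where
\[
\Omega_d(r) \;=\; \Gamma(d/2)\,(2/r)^{(d-2)/2}\,J_{(d-2)/2}(r).
\]
The conditional characteristic function of $X(t)$ is then an average of $\prod_j \Omega_d(c\tau_j\|\xi\|)$ over the uniform simplex. The crucial structural point is that in $d\in\{1,2,4\}$ the Bessel index $(d-2)/2\in\{-1/2,0,1\}$ makes $\Omega_d$ elementary or self-reproducing under classical product identities: $\Omega_1=\cos$, $\Omega_2=J_0$, and $\Omega_4(r)=2J_1(r)/r$. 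In these dimensions the simplex integral and the subsequent inverse Fourier transform can be carried out term by term, yielding densities supported on the causal ball $\{\|x\|\leq ct\}$ and expressible through modified Bessel functions of the first kind. Resumming over $n$ against the Poisson weights then produces the closed-form expressions reported in \citep{orsingher2007random}.

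The hard part, and the reason the tractable list is exactly $\{1,2,4\}$, is the evaluation of the iterated simplex integral of products of $J_{(d-2)/2}$. Sonine-type Bessel product identities and the dimension-lowering relation between $J_1(r)/r$ and a planar radial integral of $J_0$ produce the necessary collapse only when $(d-2)/2\in\{-1/2,0,1\}$. For $d=3$ one obtains at best series in Legendre or Gegenbauer polynomials, and for even $d\geq 6$ and odd $d\geq 5$ the analogous collapse fails by the same mechanism. The ``no explicit formula'' clause is therefore a statement about the current state of the art rather than a formal impossibility theorem, and I would take it as given from \citep{orsingher2007random} rather than attempt to prove a rigorous non-existence result.
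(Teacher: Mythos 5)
The paper does not prove this statement at all: Theorem~\ref{thm:state_distr} is imported verbatim from \citet{orsingher2007random} and used as a black box (the authors only need the analytic state distribution to sample $x$ in their training loop), so there is no in-paper proof to compare yours against. That said, your sketch is a faithful reconstruction of the strategy actually used in the cited source: condition on $N(t)=n$, use the fact that the Poisson jump times are order statistics of i.i.d.\ uniforms so the sojourn lengths are uniform on the simplex, write the conditional characteristic function as a simplex average of products of $\Omega_d(c\tau_j\|\xi\|)$ with $\Omega_d(r)=\Gamma(d/2)(2/r)^{(d-2)/2}J_{(d-2)/2}(r)$, and observe that the Bessel index $(d-2)/2\in\{-1/2,0,1\}$ is what makes $d\in\{1,2,4\}$ special. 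Your closing caveat is also the right one: the ``no explicit formula'' clause is a statement about tractability, not a non-existence theorem, and should be taken on the authority of the reference. Two small caveats. First, this is an outline rather than a proof --- the entire content of the theorem lives in the iterated simplex integrals and the Poisson resummation that you explicitly defer (e.g.\ the conditional density in $d=2$ collapses to $\frac{n}{2\pi(ct)^2}(1-\|x\|^2/(c^2t^2))^{n/2-1}$, and the $d=2,4$ unconditional densities come out elementary rather than in terms of modified Bessel functions; the $I_0,I_1$ expressions you mention are specific to $d=1$). Second, be careful with the indexing: with $N(t)=n$ switches there are $n+1$ legs and $n+1$ i.i.d.\ directions, which you handle correctly but which the paper's own displayed formula for $X(t)$ writes sloppily. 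None of this is a gap relative to the paper, since the paper asserts the result by citation only.
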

\begin{theorem} \citep{orsingher2007random} \label{thm:state_distr_solves_eq}
    For $d = 1$ and $2$, the state distribution solves the damped wave equation.
\end{theorem}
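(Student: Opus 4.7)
The plan is to lift the state space so the process becomes Markovian with a clean forward equation, then extract a density equation by taking angular moments. Since $X(t)$ alone is not Markov---one must also track the current direction $U(t)$---I would work with the joint density $f(t,x,u)$ on $\R^d \times \mathbb{S}^{d-1}$. The direction process is pure jump of rate $\lambda$, and between jumps the position advects at velocity $cu$, so standard piecewise-deterministic Markov process arguments give the forward Kolmogorov equation
\begin{align*}
    \partial_t f + c\, u \cdot \nabla_x f = -\lambda f + \frac{\lambda}{|\mathbb{S}^{d-1}|}\int_{\mathbb{S}^{d-1}} f(t,x,u') \, d\sigma(u').
\end{align*}

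Next I would take angular moments. Integrating over $u$ yields the continuity equation $\partial_t p + \nabla \cdot m = 0$ with $p(t,x) = \int f\, d\sigma$ and $m(t,x) = c\int u f\, d\sigma$. Multiplying by $cu$ and integrating gives $\partial_t m + c^2\, \mathrm{div}\, K = -\lambda m$, where $K_{ij}(t,x) = \int u_i u_j f\, d\sigma$; the uniform-average term contributes nothing after being multiplied by $u$. Combining these via $\partial_{tt} p = -\nabla \cdot \partial_t m$ produces the identity $\partial_{tt} p + \lambda \partial_t p = c^2 \sum_{i,j} \partial_{x_i}\partial_{x_j} K_{ij}$, which matches the damped wave form of equation~\ref{eq:damped_wave_eq} provided the right-hand side reduces to $c^2 \Delta p$ up to a dimension-dependent constant.

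For $d=1$ this closes immediately: directions lie in $\{\pm 1\}$ so $u^2 \equiv 1$ and $K \equiv p$, yielding $\partial_{tt} p + \lambda \partial_t p = c^2 \partial_{xx} p$. (Equivalently, splitting $p = p^+ + p^-$ into right- and left-moving densities produces the coupled first-order system implied by the jump dynamics, which one eliminates to recover the same telegraph equation.) For $d=2$ the moment system no longer closes, because $K$ carries non-constant angular harmonics. Here the plan is to invoke Theorem~\ref{thm:state_distr} for the explicit planar density---a smooth interior part on $\{|x| < ct\}$ plus a singular shell on the light cone $\{|x| = ct\}$---and verify equation~\ref{eq:damped_wave_eq} by direct differentiation: first pointwise inside the disc using the modified-Bessel / hyperbolic factors of $\sqrt{c^2t^2 - |x|^2}$, then across the shell so that the singular contributions to $\partial_{tt} p$ and $\Delta p$ balance in the distributional sense.

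The main obstacle is the $d=2$ step, where the naive moment closure breaks down. The saving grace is the rotational symmetry inherited from $X(0)=0$ with uniformly chosen initial direction, which is precisely what makes the closed-form density of Theorem~\ref{thm:state_distr} tractable. The delicate technical piece is the distributional bookkeeping at the propagation front $|x|=ct$: the delta-shell, its normal derivatives, and the jump of the interior density must line up so that the second-order time and space derivatives agree on the singular support. This is the step that genuinely uses the low-dimensional structure and explains why the argument does not extend cleanly to $d \geq 3$.
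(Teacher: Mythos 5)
This theorem is imported verbatim from \citet{orsingher2007random}; the paper contains no proof of it, so there is no in-paper argument to compare against. Judged on its own terms, your route is the standard one and is essentially how the cited reference establishes the result. The kinetic lift to $f(t,x,u)$ with the BGK-type forward equation is correct for this process, the zeroth and first angular moments are computed correctly (the uniform gain term indeed vanishes against $\int_{\mathbb{S}^{d-1}} u\,d\sigma(u)=0$), and for $d=1$ the closure $K\equiv p$ gives a complete derivation of $\partial_{tt}p+\lambda\partial_t p=c^2\partial_{xx}p$, i.e., \eqref{eq:damped_wave_eq} with $\xi=\lambda$ (note the resampling rate $\lambda$ corresponds to an effective reversal rate $\lambda/2$, so the damping constant comes out right in your convention). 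The one substantive reservation is the $d=2$ case: there your text correctly diagnoses that the moment hierarchy does not close and names the right fallback --- direct differentiation of the explicit planar density of Theorem~\ref{thm:state_distr}, with distributional matching of the absolutely continuous part against the singular shell at $\|x\|=ct$ --- but it does not carry out either computation, and that verification \emph{is} the entire content of the two-dimensional statement. As written, the $d=2$ portion is a credible plan rather than a proof; to close it you would need the interior identity for the density proportional to $\exp\bigl(\tfrac{\lambda}{c}\sqrt{c^2t^2-\|x\|^2}\bigr)/\sqrt{c^2t^2-\|x\|^2}$ and the front-balance computation, both of which appear in \citet{orsingher2007random}.
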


\subsection{Image Generation via Component-wise 1-D Telegrapher Equation}

\citet{duong2025telegrapher} study Kac process and the telegrapher equation (the 1-D damped wave equation), and derive a closed-form expression for the conditional velocity $v(t, x | x_0)$ for any time $t$, and noisy state $x \in \R$ given data $x_0$.
Leveraging this expression, they propose a simple procedure to learn the data distribution from noise:
\begin{enumerate}
    \item Given data $x_0$, sample a time $t$ and then draw the state $x$ from the analytic state distribution (Theorem \ref{thm:state_distr}).
    \item Evaluate the closed-form conditional velocity $v(t, x | x_0)$ and fit a neural network $v_\theta(t,x)$ to it by regression (e.g. minimizing $\mathbb{E}[\|v_\theta(t,x) - v(t, x | x_0)\|^2]$).
\end{enumerate}
At inference, draw $x_T \sim p(T, x)$ (noise) and integrate the learned velocity field backward in time from $t=T$ to $t=0$ to obtain a data sample $x_0$.

For multi-dimensional data, e.g., images, they model each coordinate with an independent 1-D Kac process, i.e., a component-wise product construction.

Because each coordinate follows the 1-D Kac dynamics, the component-wise image model inherits finite-speed propagation and dimension-aware energy bounds.
In 1-D, the Kac solution started at $x_0$ is supported on $[x_0-ct, x_0+ct]$, where $c$ is the wave front speed in \Cref{eq:damped_wave_eq}, and the conditional velocity satisfies $|v(t,x|x_0)|\le c$, with $v(t,\pm ct| x_0)=\pm c$.
Applying this per coordinate, the component-wise construction for images propagates inside the $\ell_\infty$ causal cone $\{x:\|x-x_0\|_\infty\le ct\}$.
Moreover, for the resulting $d$-dimensional (product) Kac flow $(\mu_t)_{t\ge0}$ there exist dimension-aware global bounds
\begin{align}
W_2(\mu_s,\mu_t) \le\ c\sqrt d|t-s|,\qquad
\|v_t\|_{L^2(\mu_t)} \le c \sqrt d \quad\text{for a.e. }t\in[0,1],
\end{align}
and the support grows at most linearly in time.
We will leverage these properties to prove bounded energy under classifier-free guidance and endpoint-to-trajectory stability for distillation.

\subsection{Notations}
Let $t \in \R$ denote time, $x \in \R^d$ the state, and $y \in \mathbb{N}$ a class label.
Let $\mathcal{P}_2(\R^d)$ denote the set of Borel probability measures on $\R^d$ with finite second moment, endowed with the quadratic Wasserstein distance
\begin{align}
    W_2^2(\mu, \nu):= \inf_{\pi \in \Pi(\mu, \nu)}\int_{\R^d\times \R^d} \|x - y\|^2 \diff \pi(x, y),
\end{align}
for $\mu, \nu \in \mathcal{P}_2(\R^d)$, where $\Pi(\mu, \nu)$ is the set of couplings of $\mu$ and $\nu$. 
For a measurable map $F: \R^d \to \R^d$, the pushforward measure of $\mu$ is $F_\#\mu:=\mu \circ F^{-1}$. 

We use \emph{Kac process} when we talk about individual random paths (simulation/noising), and \emph{Kac flow} when we talk about how distributions move in time and the ODE we integrate at inference.

\section{Classifier Free Guidance in Velocity Space}
While \citet{duong2025telegrapher} focus on unconditional image generation, we address both unconditional and conditional generation, and further introduce classifier-free guidance in velocity space.

Let $v_\theta(t,x)$ and $v_\theta(t,x;y)$ be the learned unconditional and conditional velocities, respectively.
We define the velocity guidance with strength $w(t) \geq 0$ by
\begin{align}
    \tilde{v}(t, x;y) = v_\theta(t, x) + w(t) [ v_\theta(t, x; y) - v_\theta(t, x)],
\end{align}
and integrate the reverse ODE $\dot{\varphi}(t) = - \tilde{v}(t, \varphi(t); y)$ from $t = 1$ to $0$ to generate data samples.

Let $\Delta_\theta(t, x; y) := v_\theta(t, x; y) - v_\theta(t, x)$ denote the conditional-unconditional velocity gap. 
\begin{assumption}\label{assumption:guidance_gap} (Square-integrable guidance gap)
    For a.e. $t \in [0, 1]$, $\E_{x \sim \mu_t}[\|\Delta_\theta(t, x; y)\|^2_2] < \infty$ for the random condition $y$, or equivalently, $\E_y \E_{x \sim \mu_t}[\|\Delta_\theta(t, x; y)\|^2_2] < \infty$.
\end{assumption}

\begin{theorem}\label{thm:cfg}(Energy bound under guidance)
    Suppose the unconditional Kac flow satisfies $\|v_\theta(t, \cdot)\|_{L^2(\mu_t)} \leq c \sqrt{d}$ for a.e. $t$, and Assumption \ref{assumption:guidance_gap} holds. If $|w(t)|$ is finite for a.e. $t$, then $\tilde{v}(t, \cdot;y) \in L^2(\mu_t)$ for a.e. $t$.
\end{theorem}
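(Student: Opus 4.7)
The plan is to prove the bound by a direct triangle inequality in $L^2(\mu_t)$, reducing the claim to two separate finiteness checks, one for each summand in the definition of $\tilde v$.

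First, I would expand the definition $\tilde v(t,x;y) = v_\theta(t,x) + w(t)\,\Delta_\theta(t,x;y)$ and apply the Minkowski (triangle) inequality in $L^2(\mu_t)$ to obtain
\begin{align}
\|\tilde v(t,\cdot;y)\|_{L^2(\mu_t)} \;\le\; \|v_\theta(t,\cdot)\|_{L^2(\mu_t)} + |w(t)|\,\|\Delta_\theta(t,\cdot;y)\|_{L^2(\mu_t)}.
\end{align}
This is the only nontrivial manipulation; everything else is bookkeeping. Note that scalar multiplication by $w(t)$ commutes with the $L^2(\mu_t)$ norm because $w(t)$ does not depend on $x$.

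Next I would dispatch the two terms. The first is bounded by $c\sqrt d$ for a.e.\ $t$ by the hypothesized unconditional energy bound on $v_\theta$ (which is itself a consequence of the product-Kac structure recalled before the theorem). The second is finite for a.e.\ $t$ because Assumption \ref{assumption:guidance_gap} gives $\E_{x\sim\mu_t}\|\Delta_\theta(t,x;y)\|_2^2<\infty$, i.e.\ $\|\Delta_\theta(t,\cdot;y)\|_{L^2(\mu_t)}<\infty$, and the pointwise-in-$t$ finiteness of $|w(t)|$ is assumed. Since a finite union of null sets is null, both bounds hold simultaneously for a.e.\ $t$, giving $\|\tilde v(t,\cdot;y)\|_{L^2(\mu_t)}<\infty$.

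There is no real obstacle here; the only subtlety is how to read the assumption's conditional expectation. I would remark that if one prefers the marginal form $\E_y\E_{x\sim\mu_t}\|\Delta_\theta\|_2^2<\infty$, then Fubini gives $\E_{x\sim\mu_t}\|\Delta_\theta(t,x;y)\|_2^2<\infty$ for $y$ almost surely, which is what is used above; either reading yields the same conclusion. Finally I would emphasize that the proof does not use any regularity of $w$ beyond finiteness, so schedules like truncated or piecewise-constant guidance are admissible; the structural inputs used are precisely the Kac finite-speed energy bound and the mild integrability of the guidance gap.
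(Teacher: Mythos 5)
Your proof is correct and follows essentially the same route as the paper's: a triangle (Minkowski) inequality in $L^2(\mu_t)$, the unconditional bound $c\sqrt{d}$ for the first term, and Assumption~\ref{assumption:guidance_gap} together with finiteness of $|w(t)|$ for the second. Your added remark on the Fubini reading of the assumption in $y$ is a minor clarification the paper leaves implicit, but it does not change the argument.
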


The proof is in the Appendix. 
The assumption that $\|v_\theta(t, \cdot)\|_{L^2(\mu_t)} \leq c \sqrt{d}$ for a.e. $t$ follows from a corresponding $L^2(\mu_t)$ bound on the velocity field in \citet{duong2025telegrapher} under its stated hypothesis, which is generally satisfied.
%Note that $c$ is the wave front speed in \Cref{eq:damped_wave_eq}, and $d$ is the dimension of the state $x$.
In practice, we may take $w(t)$ to be a constant $w$, so the requirement that $|w(t)|$ is finite for a.e. $t$ is automatically satisfied.
The conclusion $\tilde{v}(t, \cdot;y) \in L^2(\mu_t)$ for a.e. $t$ then implies that the guided velocity has finite kinetic energy, unlike diffusion, where the kinetic energy can diverge near data time. 

\section{Distillation}
We distill a student from a frozen teacher by endpoint matching. 
The teacher is either a learned flow or a classifier-free guidance model. 
Unlike progressive distillation \citep{salimansprogressive}, which performs two-step distillation per iteration, we match over an arbitrary number of $N$ substeps with $N \geq 2$.
The distillation algorithm is shown in Algorithm \ref{alg:distill}.
At each training iteration and for each sample $b$, the \textsc{Teacher} routine integrates the frozen teacher velocity $u_{\theta}$ backward from $t^{(b)}$ to $t^{(b)} - \Delta t$ using $N$ uniform substeps (e.g., explicit Euler, midpoint/RK2, Adams-Bashforth-2), producing the reference endpoint $x_\star^{(b)}$.
The endpoints $t^{(b)}$ and $t^{(b)} - \Delta t$ are chosen to coincide with the student's segment boundaries: for an $M$-step student with a uniform time grid $1 = t_1 \geq \cdots \geq t_{M+1} = 0$, if $t^{(b)} = t_{k}$, then $t^{(b)} - \Delta t = t_{k + 1}$.
Over the same interval $[t^{(b)} - \Delta t, t^{(b)}]$, the student takes one explicit Euler step (backward in time), and is trained with an MSE loss to match $x_\star^{(b)}$.
Both the teacher and the student are initialized from the same pre-trained velocity model $\tilde{v}_\theta$, and the teacher remains frozen throughout training.

\begin{algorithm}[t] 
\caption{Endpoint Distillation ($N$ steps)}
\label{alg:distill}
\begin{algorithmic}[1]
\State \textbf{Inputs:} $N$, $\Delta t$, $\tilde{v}_{\theta}$ (normal/CFG velocity), $\eta$, $B$, $max\_iter$
\State \textbf{Init:} teacher $u_{\theta} \leftarrow \tilde{v}_{\theta}$ (frozen), student $v_{\theta} \leftarrow \tilde{v}_{\theta}$
\For{$i=1$ \textbf{to} $max\_iter$}
  \State sample $\{(t^{(b)}, x^{(b)}, y^{(b)})\}_{b=1}^B$
  \State $x_\star^{(b)} \leftarrow \textsc{Teacher}(u_{\theta}, t^{(b)}, x^{(b)}, y^{(b)}, N, \Delta t)$
  \State $\hat x^{(b)} \leftarrow x^{(b)} - v_\theta(t^{(b)},x^{(b)};y^{(b)}) \Delta t $
  \State $\mathcal L(\theta) \leftarrow  \frac{1}{B}\sum_b \|\hat x^{(b)}-x_\star^{(b)}\|_2^2$;\quad $\theta \leftarrow \theta - \eta \nabla_\theta \mathcal L(\theta)$.
\EndFor
\end{algorithmic}
\end{algorithm}

Let $u$ and $v$ be the velocity fields for the teacher and the student Kac flows, respectively. 
The corresponding ODEs defined on $[0, 1]$ are
\begin{align}
    \dot{\varphi}_t &= u(t, \varphi_t; y) \label{eq:teacher_reverse_ode}\\
    \dot{\psi}_t &= v(t, \psi_t; y)
\end{align}
Let $\mu_t$ and $\nu_t$ be the respective distributions of $\varphi_t$ and $\psi_t$.
Let $\Phi_{s \to t}$ and $\Psi_{s \to t}$ denote the flows associated with $\varphi_t$ and $\psi_t$, respectively.
That is, for $0 \leq s \leq t \leq 1$, the flow map $\Phi_{s \to t} : \R^d \to \R^d$ is the unique map such that for every $x \in \R^d$, the trajectory $\tau \mapsto \Phi_{s\to \tau}(x)$ is absolutely continuous on $[s, t]$ and solves the ODE \Cref{eq:teacher_reverse_ode} for a.e. $\tau \in [s, t]$, with the initial condition $\varphi_s = x$:
\begin{align}
    \Phi_{s \to t}(x) = x + \int_s^t u(r, \Phi_{s \to r} (x); y) \diff r.
\end{align}
The flow $\Phi_{s \to t}$ is also well defined when $0 \leq t \leq s \leq 1$.
In particular, sampling from noise $(t = 1)$ to data $(t = 0)$ corresponds to the single map $\Phi_{1 \to 0}$.

We now lay out mild conditions on the teacher and student dynamics that ensure the flows are well defined.
The subsequent assumptions and lemmas formalize these regularity and boundedness requirements and show that small discrepancies do not explode as we evolve in time.
With this groundwork in place, the main stability result will convert endpoint agreement into closeness of entire trajectories, and the concluding corollary will turn that principle into concrete accuracy guarantees for practical, few-step students.

\begin{assumption} \label{assumption:spatial_lip_drifts}
    (Spatial Lipschitz drifts) There exists $L(t)$ such that, for a.e. $t\in [0, 1]$, both $u(t, \cdot;y)$ and $v(t, \cdot;y)$ are $L(t)$-Lipschitz in $x$ on the support of $\mu_t \cup \nu_t$, and $\int_0^1 L(t) \diff t < \infty$.
\end{assumption}

\begin{lemma} \label{lemma:lip_const_of_flow_maps}
    (Lipschitz constants of flow maps) Under Assumption \ref{assumption:spatial_lip_drifts}, for any $0 \leq s \leq t \leq 1$, $Lip(\Phi_{s \to t}) \leq \exp(\int_{s}^t L(r) \diff r)$, and similarly for $\Psi_{s\to t}$.
\end{lemma}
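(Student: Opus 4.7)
The plan is to read this as a classical Grönwall stability estimate for ODEs with spatially Lipschitz, time-integrable right-hand sides, applied to the separation of two trajectories of the same flow starting from different initial conditions. Both $u$ and $v$ obey the same assumptions, so it suffices to carry out the argument for $\Phi$; the bound for $\Psi$ is identical.

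Fix $0\le s\le t\le 1$ and two initial conditions $x,y$ in the support of $\mu_s$. Let $\varphi_\tau=\Phi_{s\to\tau}(x)$ and $\tilde\varphi_\tau=\Phi_{s\to\tau}(y)$, both of which are absolutely continuous on $[s,t]$ by the definition of the flow, and write their integral forms. Subtracting and taking norms, I would obtain
\begin{align*}
\|\varphi_\tau-\tilde\varphi_\tau\|\le \|x-y\|+\int_s^\tau \|u(r,\varphi_r;y)-u(r,\tilde\varphi_r;y)\|\,\diff r.
\end{align*}
Assumption \ref{assumption:spatial_lip_drifts} gives, for a.e.\ $r\in[s,t]$, the pointwise Lipschitz bound $\|u(r,\varphi_r;y)-u(r,\tilde\varphi_r;y)\|\le L(r)\,\|\varphi_r-\tilde\varphi_r\|$, which I may integrate because $L$ is locally integrable. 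Setting $D(\tau):=\|\varphi_\tau-\tilde\varphi_\tau\|$ yields the integral inequality
\begin{align*}
D(\tau)\le \|x-y\|+\int_s^\tau L(r)\,D(r)\,\diff r,\qquad \tau\in[s,t].
\end{align*}
Grönwall's lemma (the version for integrable kernels) then gives $D(t)\le \|x-y\|\exp\bigl(\int_s^t L(r)\,\diff r\bigr)$, which is exactly the desired Lipschitz bound on $\Phi_{s\to t}$.

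The one subtle point, which I would flag but not belabor, is that the Lipschitz hypothesis only holds on the supports of $\mu_r\cup\nu_r$ rather than globally, so I must check that both trajectories $\varphi_r$ and $\tilde\varphi_r$ remain in that support for $r\in[s,t]$. For $\varphi$ this is immediate because $\Phi_{s\to r}$ pushes $\mu_s$ to $\mu_r$, so $\varphi_r\in\operatorname{supp}\mu_r$; likewise for any coupling between $\mu_s$ and $\nu_s$ used to set $y$. If one wants the bound for arbitrary $x,y\in\R^d$, the cleanest fix is to assume $u(t,\cdot;y)$ admits a Lipschitz extension with the same constant, which is standard and harmless for neural network parametrizations. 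The essentially identical argument applied to $v$ gives the bound for $\Psi_{s\to t}$, completing the proof. I do not expect any real obstacle here; the only thing to watch is the a.e.\ qualifier in Assumption \ref{assumption:spatial_lip_drifts}, which is handled automatically by using the integral (rather than differential) form of Grönwall.
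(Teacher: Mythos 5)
Your proof is correct and takes essentially the same route as the paper: compare two trajectories of the same flow, bound their separation using the spatial Lipschitz condition, and apply Gr\"{o}nwall. The only difference is that you use the integral form of Gr\"{o}nwall while the paper differentiates $\|\Delta(t)\|$ and uses the differential form; your version handles the a.e.\ qualifier on $L(t)$ slightly more cleanly, and your remark about trajectories staying in the support is a point the paper glosses over, but neither changes the substance of the argument.
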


The proofs of this lemma and the next are provided in the appendix.
Here $Lip(\Phi_{s \to t})$ denotes the Lipschitz constant of the flow $\Phi_{s \to t}$.

\begin{lemma} \label{lemma:pushforward_contraction_and_coupling}
    (Pushforward contraction and coupling)
    For any $Lip(F)$-Lipschitz map $F: \R^d \to \R^d$ and $\mu, \nu \in \mathcal{P}_2(\R^d)$, $W_2(F_\#\mu, F_\#\nu) \leq Lip(F) W_2(\mu, \nu)$.
    Moreover, for any probability measure $\nu \in \mathcal{P}_2(\R^d)$, and measurable maps $F, G: \R^d \to \R^d$, with $\int\|F(x)\|^2 \diff \nu(x) < \infty$ and $\int\|G(x)\|^2 \diff \nu(x) < \infty$, one has $W_2(F_\#\nu, G_\#\nu) \leq (\E_{X \sim \nu}\|F(X) - G(X)\|_2^2)^{1/2}$.
\end{lemma}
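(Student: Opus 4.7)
The plan is to prove both inequalities by constructing explicit couplings between the relevant pushforward measures and then using the definition of $W_2$ as an infimum over couplings.

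For the first inequality, I would start with any coupling $\pi \in \Pi(\mu,\nu)$ (in particular an optimal one achieving $W_2(\mu,\nu)$) and consider the pushforward $\bar\pi := (F \times F)_\# \pi$ under the product map $(x,y)\mapsto (F(x),F(y))$. A direct check of the marginals shows that $\bar\pi \in \Pi(F_\#\mu, F_\#\nu)$. Plugging $\bar\pi$ into the definition of $W_2$ yields
\begin{align*}
W_2^2(F_\#\mu, F_\#\nu) \;\le\; \int \|F(x)-F(y)\|^2 \,\diff\pi(x,y) \;\le\; \mathrm{Lip}(F)^2 \int \|x-y\|^2 \,\diff\pi(x,y),
\end{align*}
using the Lipschitz property pointwise. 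Taking the infimum over $\pi$ gives $W_2(F_\#\mu,F_\#\nu) \le \mathrm{Lip}(F)\,W_2(\mu,\nu)$.

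For the second inequality, the natural coupling is the \emph{synchronous} one: define the map $H(x) := (F(x), G(x))$ from $\R^d$ to $\R^d \times \R^d$ and set $\gamma := H_\# \nu$. Again a marginal check shows $\gamma \in \Pi(F_\#\nu, G_\#\nu)$, and by the change-of-variables formula
\begin{align*}
W_2^2(F_\#\nu,G_\#\nu) \;\le\; \int \|u-v\|^2\,\diff\gamma(u,v) \;=\; \int \|F(x)-G(x)\|^2 \,\diff\nu(x) \;=\; \mathbb{E}_{X\sim\nu}\|F(X)-G(X)\|_2^2.
\end{align*}
The square-integrability hypotheses on $F$ and $G$ ensure that $F_\#\nu,G_\#\nu \in \mathcal{P}_2(\R^d)$ so that $W_2$ is finite and the bound is meaningful.

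There is no genuine obstacle here; both claims are standard facts about $W_2$, and the only substantive step in each is recognizing which coupling to use ($(F\times F)_\#\pi$ for the contraction bound, and $(F,G)_\#\nu$ for the coupling bound). The remaining work is a routine verification of marginals and a change of variables, which I would state briefly rather than expanding into a lengthy calculation.
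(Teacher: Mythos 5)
Your proposal is correct and follows essentially the same route as the paper's proof: the contraction bound via the coupling $(F\times F)_\#\pi$ for an optimal $\pi$ (which the paper uses implicitly), and the second bound via the synchronous coupling $(F\times G)_\#\nu$. No gaps.
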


The following result captures the core stability principle behind endpoint distillation: if a student matches the teacher at the end of each interval, then the two paths remain close throughout that interval.

\begin{theorem} \label{thm:endpoint_to_trajectory_stability}
    (Endpoint-to-trajectory stability) 
    Let $\varepsilon := W_2(\mu_s, \nu_s)$ be the discrepancy at some time $s \in [0, 1]$. Under Assumption \ref{assumption:spatial_lip_drifts}, for every $\tau \in [s, 1]$,
    \begin{align}
        W_2(\mu_\tau, \nu_\tau) \leq \exp\left(\int_{s}^\tau L(r) \diff r\right) \varepsilon + \int_s^\tau \exp\left(\int_t^\tau L(r) \diff r\right) \| u(t, \cdot; y) - v(t, \cdot; y)\|_{L^2(\nu_t)} \diff t
    \end{align}
\end{theorem}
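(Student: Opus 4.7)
The plan is to decompose $W_2(\mu_\tau,\nu_\tau)$ via the triangle inequality around the intermediate measure $\Phi_{s\to\tau}\#\nu_s$, i.e.\ the student's initial law pushed forward by the \emph{teacher} flow. Using $\mu_\tau=\Phi_{s\to\tau}\#\mu_s$ and $\nu_\tau=\Psi_{s\to\tau}\#\nu_s$, this gives
\begin{align*}
W_2(\mu_\tau,\nu_\tau)\le W_2(\Phi_{s\to\tau}\#\mu_s,\Phi_{s\to\tau}\#\nu_s)+W_2(\Phi_{s\to\tau}\#\nu_s,\Psi_{s\to\tau}\#\nu_s).
\end{align*}
The first summand is a pushforward of two different measures by the same map, so the pushforward-contraction clause of Lemma \ref{lemma:pushforward_contraction_and_coupling} combined with Lemma \ref{lemma:lip_const_of_flow_maps} bounds it by $\exp(\int_s^\tau L(r)\,\diff r)\,\varepsilon$. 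The second summand compares two different flow maps on the same initial law $\nu_s$, and the coupling clause of Lemma \ref{lemma:pushforward_contraction_and_coupling} bounds it by $\|\Phi_{s\to\tau}-\Psi_{s\to\tau}\|_{L^2(\nu_s)}$.

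The next step is to estimate $g(\tau):=\|\Phi_{s\to\tau}-\Psi_{s\to\tau}\|_{L^2(\nu_s)}$. Writing the pointwise difference as $\int_s^\tau [u(r,\Phi_{s\to r}(x);y)-v(r,\Psi_{s\to r}(x);y)]\,\diff r$ and inserting $\pm\,u(r,\Psi_{s\to r}(x);y)$ in the integrand, Minkowski's integral inequality in $L^2(\nu_s)$ yields two contributions. Assumption \ref{assumption:spatial_lip_drifts} bounds the first by $L(r)\,g(r)$, while the change-of-variables identity $\Psi_{s\to r}\#\nu_s=\nu_r$ rewrites the second as $h(r):=\|u(r,\cdot;y)-v(r,\cdot;y)\|_{L^2(\nu_r)}$. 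Altogether,
\begin{align*}
g(\tau)\le \int_s^\tau h(r)\,\diff r+\int_s^\tau L(r)\,g(r)\,\diff r.
\end{align*}

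Finally, I would apply a generalized Gronwall argument to convert this into $g(\tau)\le \int_s^\tau h(r)\exp(\int_r^\tau L(\rho)\,\diff\rho)\,\diff r$. Introducing $P(\tau):=\int_s^\tau L(r)g(r)\,\diff r$ and $\alpha(\tau):=\int_s^\tau h(r)\,\diff r$, one has $P'(\tau)\le L(\tau)[\alpha(\tau)+P(\tau)]$; multiplying by the integrating factor $\exp(-\int_s^\tau L)$, integrating, and simplifying via Fubini together with $\tfrac{\diff}{\diff t}\exp(\int_t^\tau L)=-L(t)\exp(\int_t^\tau L)$ collapses the double integral into the advertised single integral. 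Adding this to the first-term bound proves the theorem. The principal technical care is in the Gronwall step and in the change-of-variables identity $\Psi_{s\to r}\#\nu_s=\nu_r$ used to express the velocity gap along student trajectories as an $L^2(\nu_r)$ norm; the remaining ingredients follow directly from Assumption \ref{assumption:spatial_lip_drifts} and the two preceding lemmas.
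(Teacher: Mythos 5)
Your proposal is correct and follows essentially the same route as the paper: the identical triangle-inequality decomposition around $\Phi_{s\to\tau\#}\nu_s$, the same two lemmas for the first term, and the same combination of the Lipschitz assumption, the pushforward identity $\Psi_{s\to r\#}\nu_s=\nu_r$, Minkowski's integral inequality, and Gr\"{o}nwall for the second term. The only (immaterial) difference is ordering: the paper applies Gr\"{o}nwall pointwise along each trajectory and then takes $L^2(\nu_s)$ norms, whereas you take $L^2(\nu_s)$ norms first and apply the integral-form Gr\"{o}nwall to $g(\tau)$; both yield the stated bound.
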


\begin{proof}
By well-posedness, $\mu_\tau = \Phi_{s \to \tau \#}\mu_s$ and $\nu_\tau = \Psi_{s \to \tau \#}\nu_s$. 
Apply the triangle inequality and Lemma \ref{lemma:pushforward_contraction_and_coupling}, 
\begin{align}
    W_2(\mu_\tau, \nu_\tau) &\leq W_2( \Phi_{s \to \tau \#}\mu_s, \Phi_{s \to \tau \#}\nu_s) + W_2(\Phi_{s \to \tau \#}\nu_s, \Psi_{s \to \tau \#}\nu_s)\\
    &\leq Lip(\Phi_{s\to \tau}) W_2(\mu_s, \nu_s) + (\E_{X \sim \nu_s} \| \Phi_{s \to \tau}(X) - \Psi_{s\to \tau}(X)\|_2^2)^{1/2}
\end{align}
By Lemma \ref{lemma:lip_const_of_flow_maps}, $Lip(\phi_{s \to \tau}) \leq \exp(\int_s^\tau L(r)\diff r)$, giving the first term.

For the second term, define $\Delta_t(X) := \Phi_{s \to t}(X) - \Psi_{s\to t}(X)$ for $X \sim \nu_s$.
Then $\frac{d}{dt}\Delta_t = u(t, \Phi_{s \to t}(X); y) - v(t, \Psi_{s \to t}(X); y)$.
We have
\begin{align}
    \frac{d}{dt}\|\Delta_t\| &\leq \|\frac{d}{dt} \Delta_t\| \ \ \ \ \text{(by Cauchy–Schwarz inequality, see \Cref{eq:cauchy_for_delta}}\\
    &= \|u(t, \Phi_{s \to t}(X); y) - v(t, \Psi_{s \to t}(X); y)\| \\
    &\leq \underbrace{\|u(t, \Phi_{s \to t}(X); y) - u(t, \Psi_{s \to t}(X); y) \|}_{\leq L(t) \|\Delta_t\| \ \text{(by Assumption \ref{assumption:spatial_lip_drifts})}} \nonumber\\
    & \ \ \ \ + \underbrace{\| u(t, \Psi_{s \to t}(X); y) - v(t, \Psi_{s \to t}(X); y)\|}_{=:\delta(t, \Psi_{s \to t})} \ \ \ \ \text{(by triangle inequality)}
\end{align}
By Gr\"{o}nwall's inequality with $\|\Delta_s\| = 0$,
\begin{align}
    \|\Delta_\tau(X)\| \leq \int_s^\tau \exp\left(\int_t^\tau L(r) \diff r\right) \delta(t, \Psi_{s \to t}(X)) \diff t.
\end{align}
Take $L^2(\nu_s)$ norms of both sides,
\begin{align}
    \|\Delta_\tau(X)\|_{L^2(\nu_s)} &\leq \left\lVert \int_s^\tau \exp\left(\int_t^\tau L(r) \diff r\right) \delta(t, \Psi_{s \to t}(X)) \diff t \right\rVert_{L^2(\nu_s)} \\
    &\leq \int_s^\tau \exp\left(\int_t^\tau L(r) \diff r\right) \|\delta(t, \Psi_{s \to t}(X))\|_{L^2(\nu_s)}  \diff t \label{eq:minkowski} \\
    &= \int_s^\tau \exp\left(\int_t^\tau L(r) \diff r\right) \|u(t, \cdot; y) - v(t, \cdot; y)\|_{L^2(\nu_t)}  \diff t .\label{eq:pushforward_identity}
\end{align}
In (\ref{eq:minkowski}) we used Minkowski's integral inequality.
In (\ref{eq:pushforward_identity}), we used pushforward identity $\Psi_{s \to t\#}\nu_s = \nu_t$.
\end{proof}

Setting $s = 0$, i.e., supervising only the data endpoint at $t = 0$, yields, for all $\tau \in [0, 1]$,
\begin{align}
    W_2(\mu_\tau, \nu_\tau) \leq \exp\left(\int_{0}^\tau L\right) W_2(\mu_0, \nu_0) + \int_0^\tau \exp\left(\int_t^\tau L\right) \| u(t, \cdot; y) - v(t, \cdot; y)\|_{L^2(\nu_t)} \diff t,
\end{align}
which is the endpoint-only distillation bound propagating the $t = 0$ mismatch to the whole trajectory.

\begin{corollary}
(Few-step students: order-p one-step methods)
Let the student be a one-step method (advancing to the next state using only the current state) of local order $p \geq 1$ with nodes $1 = t_1 \geq \cdots \geq t_{M+1} = 0$ and steps $h_k = t_k - t_{k + 1}$.
Assume that along student trajectories, the teacher drift $u$ has bounded time derivatives up to order $p$ and is $L(t)$-Lipschitz in $x$. Then there exist constants $\Gamma_k$ (depending on local $p$-th time derivatives of $u$ along teacher paths and spatial Lipschitz moduli $L(t)$) such that
\begin{align} \label{eq:few_step_student_order_p_one_step_method_bound}
    \int_{t_{k+1}}^{t_k} \|u(s, \cdot; y) - v(s, \cdot; y)\|_{L^2({\nu_s})} \diff s \leq \Gamma_k h_k^{p + 1}.
\end{align}
Consequently, 
\begin{align}
    W_2(\mu_\tau, \nu_\tau) \leq \exp\left(\int_0^\tau L\right) \varepsilon + \exp\left(\int_0^\tau L\right) \sum_{k = 1}^M \Gamma_k h_k^{p+1}, \tau \in [0, 1]
\end{align}
For explicit Euler $(p = 1)$ and uniform steps $h_k = 1/M$, the second term scales like $O(M^{-1})$ up to smoothness constants.
\end{corollary}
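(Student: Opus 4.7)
The plan is to prove the per-step bound (\ref{eq:few_step_student_order_p_one_step_method_bound}) by a standard local truncation argument, then invoke Theorem \ref{thm:endpoint_to_trajectory_stability} to propagate the per-step error to a global Wasserstein bound, and finally plug $p=1$, $h_k = 1/M$ for the explicit Euler scaling.

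For the per-step bound, I would represent the one-step scheme on $[t_{k+1}, t_k]$ by an effective student velocity $v(s, \cdot; y)$ whose integral reproduces the numerical update: explicit Euler uses $v(s, x; y) = u(t_k, x; y)$ on the interval, and midpoint/RK2 and Adams--Bashforth-2 are fixed by their stage evaluations. Under the smoothness hypothesis (teacher drift $u$ has bounded time derivatives up to order $p$ along student paths and is $L(t)$-Lipschitz in $x$), a Taylor expansion of the continuous teacher flow and the discrete update around $t_k$ shows they agree to order $p$ in $h_k$, so $\|u(s, x; y) - v(s, x; y)\| \leq \tilde{C}_k\, h_k^p$ uniformly in $x$ on the support of $\nu_s$ for $s \in [t_{k+1}, t_k]$. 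Taking $L^2(\nu_s)$ norms (dominated by the pointwise bound) and integrating over the interval of length $h_k$ gives the constant $\Gamma_k$ (aggregating the $p$-th time derivatives of $u$ along teacher paths and the spatial moduli $L(t)$) with the claimed $h_k^{p+1}$ factor.

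For the global bound, I would instantiate Theorem \ref{thm:endpoint_to_trajectory_stability} at $s = 0$ with $\varepsilon := W_2(\mu_0, \nu_0)$, bound the Gr\"onwall kernel uniformly by $\exp(\int_t^\tau L(r)\,\diff r) \leq \exp(\int_0^\tau L(r)\,\diff r)$ on $[0, \tau]$, partition the time integral along the student grid $\{t_{k+1}\}$, apply the per-step bound on each subinterval, and then enlarge the sum from the subintervals intersecting $[0,\tau]$ to all $M$ subintervals (only increasing the right-hand side). For the explicit Euler specialization, $p = 1$ and $h_k = 1/M$ yield per-step cost $M^{-2}$, and under the hypotheses $\sup_k \Gamma_k < \infty$, so $\sum_{k=1}^M \Gamma_k h_k^{p+1}$ is $M \cdot O(M^{-2}) = O(M^{-1})$.

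The main obstacle is making the ``effective velocity'' construction unambiguous for schemes with internal stages or history (RK2, AB2). The cleanest workaround is to avoid defining it explicitly and instead compare flow maps directly: write $\Phi_{t_k \to t_{k+1}}(x) - \Psi_{t_k \to t_{k+1}}(x)$ as an $O(h_k^{p+1})$ local truncation error (the classical consistency theorem for order-$p$ one-step methods applied to the smooth drift $u$), and convert it into an integrated $L^2$ velocity mismatch via the integral forms of $\Phi$ and $\Psi$ together with spatial Lipschitz continuity of $u$ to absorb the displacement between $\Phi_{t_k \to s}$ and $\Psi_{t_k \to s}$. For the specific schemes listed in Algorithm \ref{alg:distill}, this all reduces to routine Taylor bookkeeping.
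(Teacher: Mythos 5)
For the explicit-Euler case ($p=1$) your argument coincides with the paper's: freeze the student velocity at one node of the interval $I_k=[t_{k+1},t_k]$ (the paper uses $u(t_{k+1},\cdot;y)$, you use $u(t_k,\cdot;y)$ --- immaterial), bound $\|u(s,\cdot;y)-v(s,\cdot;y)\|_{L^2(\nu_s)}$ by $\int\|\partial_t u(r,\cdot;y)\|_{L^2(\nu_r)}\,\diff r$ via the fundamental theorem of calculus, integrate over $I_k$ to obtain $\Gamma_k h_k^2$, and feed the result into Theorem~\ref{thm:endpoint_to_trajectory_stability} at $s=0$ with the kernel bound $\exp(\int_t^\tau L)\le\exp(\int_0^\tau L)$. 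That is exactly the paper's derivation; note the paper only writes out this $p=1$ case and asserts the general-$p$ bound without proof.

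Where you go beyond the paper --- the general order-$p$ case --- there is a genuine gap. You claim $\|u(s,x;y)-v(s,x;y)\|\le \tilde C_k h_k^p$ uniformly for $s\in I_k$, but for a scheme such as midpoint/RK2 the natural effective velocity is constant in $s$ on the interval, so $u(s,\cdot)-v(s,\cdot)$ is generically $O(h_k)$ pointwise rather than $O(h_k^2)$: the order-$p$ accuracy of the method lives in the \emph{signed} integral $\int_{I_k}(u-v)\,\diff s$, where cancellation occurs, not in $\int_{I_k}\|u-v\|\,\diff s$. Your proposed workaround via the local truncation error of the flow maps inherits the same problem --- an $O(h_k^{p+1})$ endpoint discrepancy cannot be ``converted'' into an $O(h_k^{p+1})$ bound on the integrated absolute (or $L^2$) mismatch, precisely because passing to norms discards the cancellation. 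To make \Cref{eq:few_step_student_order_p_one_step_method_bound} hold for $p\ge2$ you would need a time-dependent effective velocity (e.g., the derivative of a degree-$p$ collocation or dense-output polynomial) that matches $u$ pointwise to order $h_k^p$ throughout the interval. Since the paper does not prove the $p\ge 2$ case either, your proposal matches the paper on everything the paper actually establishes, and the gap lies only in the extension you attempted.
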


For explicit Euler $(p = 1)$, we take the student velocity $v(s,\cdot; y) = u(t_{k + 1}, \cdot; y)$ for all $s \in I_k = [t_{k + 1}, t_k]$.
Then, by the Fundamental Theorem of Calculus,
\begin{align}
    \|u(s, \cdot; y) - v(s, \cdot; y)\|_{L^2(\nu_s)} &= \|u(s, \cdot; y) - u(t_{k + 1}, \cdot; y)\|_{L^2(\nu_s)} \\
    &\leq \int_{t_{k + 1}}^s \|\partial_t u(r, \cdot;y)\|_{L^2(\nu_r)} \diff r.
\end{align}
Integrating both sides over the interval $I_k$,
\begin{align}
    \int_{t_{k + 1}}^{t_k} \|u - v\|_{L^2(\nu_s)}\diff s &\leq \int_{t_{k + 1}}^{t_k} \int_{t_{k + 1}}^s \|\partial_t u(r, \cdot;y)\|_{L^2(\nu_r)} \diff r \diff s \\
    &= \int_{t_{k + 1}}^{t_k} \left(\int_{s = r}^{t_k} \diff s\right) \|\partial_t u(r, \cdot;y)\|_{L^2(\nu_r)} \diff r \quad  \text{(by Tonelli's theorem)}\\
    &= \int_{t_{k + 1}}^{t_k} (t_k - r) \|\partial_t u(r, \cdot;y)\|_{L^2(\nu_r)} \diff r \\
    &\leq \left(\sup_{r \in I_k} \|\partial_t u(r, \cdot;y)\|_{L^2(\nu_r)}\right) \int_{t_{k + 1}}^{t_k} (t_k - r) \diff r \\
    &= \underbrace{\frac{1}{2}\left(\sup_{r \in I_k} \|\partial_t u(r, \cdot;y)\|_{L^2(\nu_r)}\right)}_{\Gamma_k} h_k^2
\end{align}
we recover \Cref{eq:few_step_student_order_p_one_step_method_bound}.

For a first-order student, doubling the number of steps typically cuts the error by about half. 
For a higher-order student, the improvement is even faster.
In practice, we balance efficiency and accuracy: higher-order students often require more evaluations per step, so we choose the order and step count to hit a target quality within a given compute budget.
The constants depend on how smooth the teacher is and on the strength of guidance, but the qualitative rates are robust thanks to the finite-speed and bounded-energy structure of the Kac backbone.

\section{Experiments}
\begin{figure}[t]
  \centering
  \includegraphics[width=0.48\linewidth]{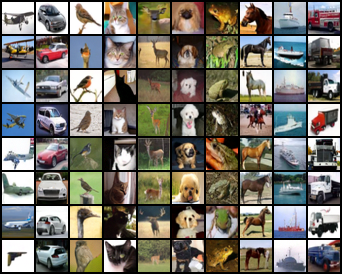} % pdf/png/jpg; no size options = natural size
  \includegraphics[width=0.48\linewidth]{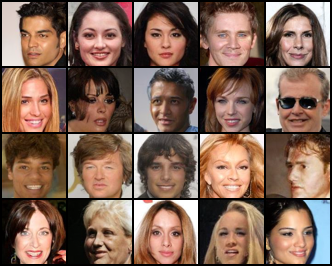} 

  \caption{Uncurated CIFAR-10 (left) and CelebA-64 (right) samples generated by Guided Kac Flow (midpoint integrator, 100 steps, guidance $w = 3$ for CIFAR-10 and $w = 0$ for CelebA-64).}
  \label{fig:generated_samples}
\end{figure}

\subsection{Setup}
We train on CIFAR-10 (32$\times$32), CelebA-64 (64$\times$64), and LSUN Bedroom-256 (256$\times$256). 
The velocity backbone is a UNet, and the hyperparameters and training details are provided in Appendix \ref{appendix:exp_configs}.
The procedure for selecting Kac flow hyperparameters is described in Appendix \ref{appendix:kac_flow_hyperparameters}.
For classifier-free guidance, we use a constant guidance strength $w$ on CIFAR-10 and unconditional generation on CelebA-64 and LSUN Bedroom-256 (i.e., $w = 0$).

\subsection{Main Results}

\newcommand{\best}[1]{\textbf{#1}}
\newcommand{\second}[1]{\underline{#1}}

\begin{table*}[t]
\centering
\begingroup
\footnotesize
\setlength{\tabcolsep}{8pt}
\renewcommand{\arraystretch}{1.15}
\begin{tabular}{lccc|cc}
\hline
& \multicolumn{3}{c|}{\textbf{CIFAR-10}} & \multicolumn{2}{c}{\textbf{CelebA-64}} \\
\cline{2-6}
\textbf{Method} & FID $\downarrow$ & NFE $\downarrow$ & Conditional & FID $\downarrow$ & NFE $\downarrow$ \\
\hline
Kac Flow \citep{duong2025telegrapher}   & 6.42 & 100 &  & -- & -- \\
\hline
Guided Kac Flow (ours, 100 steps, midpoint)         & 3.54 & 200 & \checkmark & 3.36 & 200 \\
Guided Kac Flow (ours, 100 steps, AB-2)         & 3.58 & 100 & \checkmark & 3.50 & 100 \\
DistillKac (ours) & 3.72 & 20 & \checkmark & 3.42 & 20 \\
\ & 4.14 & 4 & \checkmark & 4.36 & 4 \\
\ & 4.68 & 2 & \checkmark & 5.66 & 2 \\
\ & 5.66 & 1 & \checkmark & 7.45 & 1 \\
\hline %\noalign{\hrule height 1.2pt}
DDPM \citep{ho2020denoising}             & 3.17 & 1000 & & -- & -- \\
\hline
DDIM \citep{songdenoising}              & 4.04 & 1000 & & 3.51 & 1000 \\
\  & 4.16 & 100 & & 6.53 & 100 \\
\  & 4.67 & 50 & & 9.17 & 50 \\
\  & 6.84 & 20 & & 13.73 & 20 \\
\  & 13.36 & 10 & & 17.33 & 10 \\
\hline
Progressive distillation \citep{salimansprogressive}    & 2.57    & 8  & & -- & -- \\
\ & 3.00 & 4 & & -- & -- \\
\ & 4.51 & 2 & & -- & -- \\
\ & 9.12 & 1 & & -- & -- \\
\hline
EDM \citep{karras2022elucidating} &  1.79 & 35 & \checkmark & -- & -- \\
& 1.97 & 35 & & -- & -- \\
\hline
iCT \citep{songimproved} & 2.46 & 2 & & -- & -- \\
\ & 2.83 & 1 & & -- & -- \\
\hline
Soft Diffusion \citep{darassoft} & -- & -- & & 1.85 & 300 \\
\hline
PDM-DDPM++ \citep{wang2023patch} & -- & -- & & 1.77 & 50 \\
\hline
Soft Truncation-G++ \citep{kim2023refining} & -- & -- & & 1.34 & 131 \\
%PFGM++ / Poisson flow                    & -- &  \\
\hline
\end{tabular}
\caption{\textbf{Main results on CIFAR-10 and CelebA-64.} Lower is better for FID. 
We report means over 50k samples.
\textbf{Ours} denotes the Kac backbone with guidance and endpoint distillation.
NFE counts velocity evaluations; midpoint integrator uses 2 function evaluations per step.}
\label{tab:cifar_main_results}
\endgroup
\end{table*}

Our main CIFAR-10 results are summarized in \Cref{tab:cifar_main_results}.
We train a conditional generative model and apply velocity guidance at evaluation, referred to as \textbf{Guided Kac Flow}.
\Cref{fig:fid_vs_w_and_steps} plots FID as a function of guidance strength $w$. 
We evaluate three time integrators:
(1) Explicit Euler (first-order); % one-step
(2) Midpoint/RK2 (second-order); % one-step
(3) Adams-Bashforth-2 (AB-2; second-order). % two-step \footnote{Here ``two-step" indicates the update for $x_{n + 1}$ uses the two most recent stored function values, $f_n$ and $f_{n - 1}$. This should not be confused with the number of evaluations per step: after initialization, AB-2 computes only one new evaluation $f_n$ per step, reusing $f_{n-1}$ from the previous step.}
As shown in \Cref{fig:fid_vs_w_and_steps}, second-order methods, midpoint and AB-2, yield lower FID. Because AB-2 requires one function evaluation per step whereas midpoint requires two, AB-2 offers a better efficiency–accuracy trade-off.

\begin{figure}[t]
  \centering
  \includegraphics[width=0.48\linewidth]{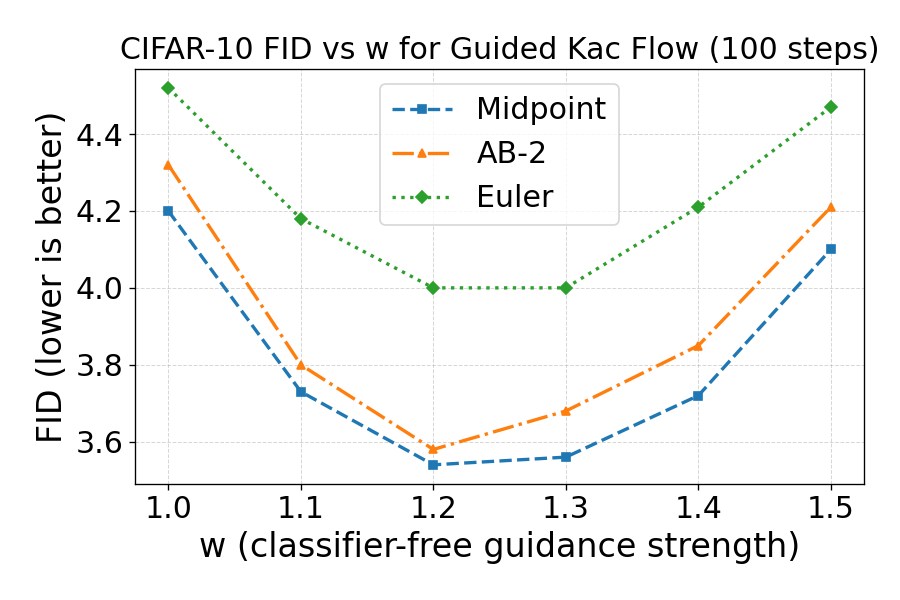}
  \includegraphics[width=0.48\linewidth]{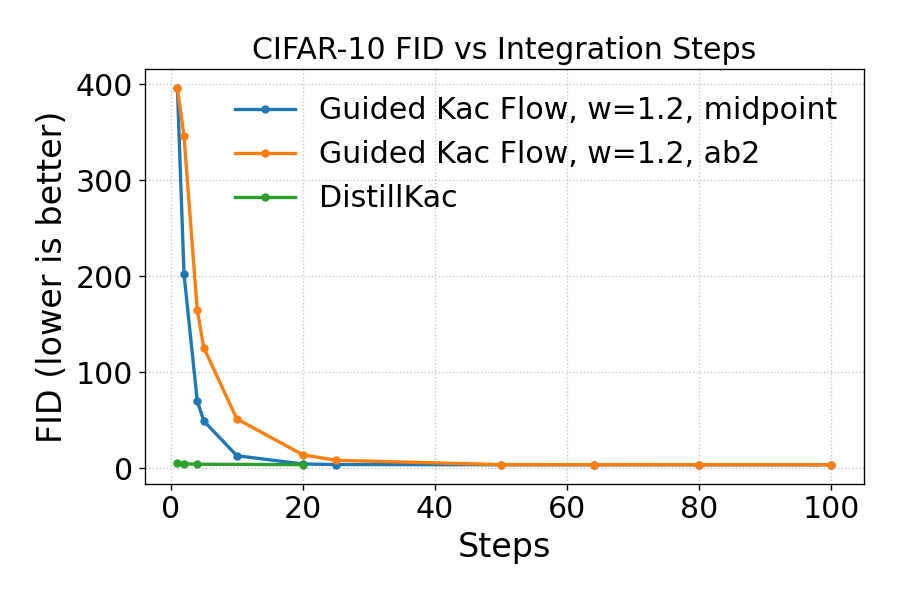}
  \caption{\textbf{Left:} CIFAR-10 FID vs. guidance strength $w$ for the 100-step Guided Kac Flow. Second-order integrators (midpoint and AB-2) outperform Euler. \textbf{Right:} CIFAR-10 FID vs. integration steps. DistillKac substantially reduces FID at 20, 4, 2, and 1 steps relative to Guided Kac Flow.}
  \label{fig:fid_vs_w_and_steps}
\end{figure}

Starting from a 100-step Guided Kac Flow model as a teacher, we distill a 20-step student and then iteratively distill to 4, 2, and 1 steps, each stage using the previously trained student as the teacher.
We refer to these distilled students as \textbf{DistillKac}.
\Cref{tab:cifar_main_results} reports their FIDs. 
As shown in \Cref{fig:fid_vs_w_and_steps}, distillation significantly improves FID at 20, 4, 2, and 1 steps relative to the original Guided Kac Flow at the same step counts.
DistillKac reduces the sampler from 100 to 1 step with FID rising only from 3.58 to 5.66 (+2.08).
We attribute this robustness to the endpoint-to-trajectory stability of Kac flows (\Cref{thm:endpoint_to_trajectory_stability}).

\begin{figure}[t]
  \centering
  \includegraphics[width=0.4\linewidth]{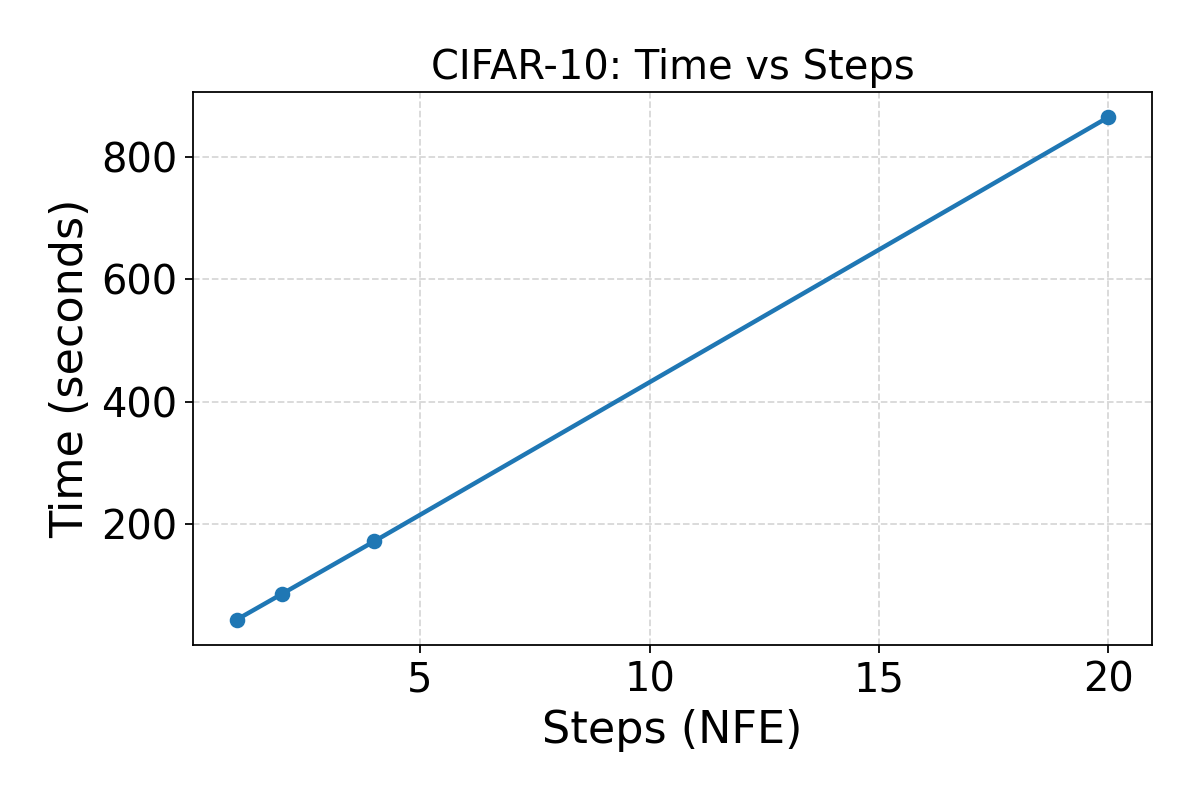} \quad \quad \quad \quad 
  \includegraphics[width=0.4\linewidth]{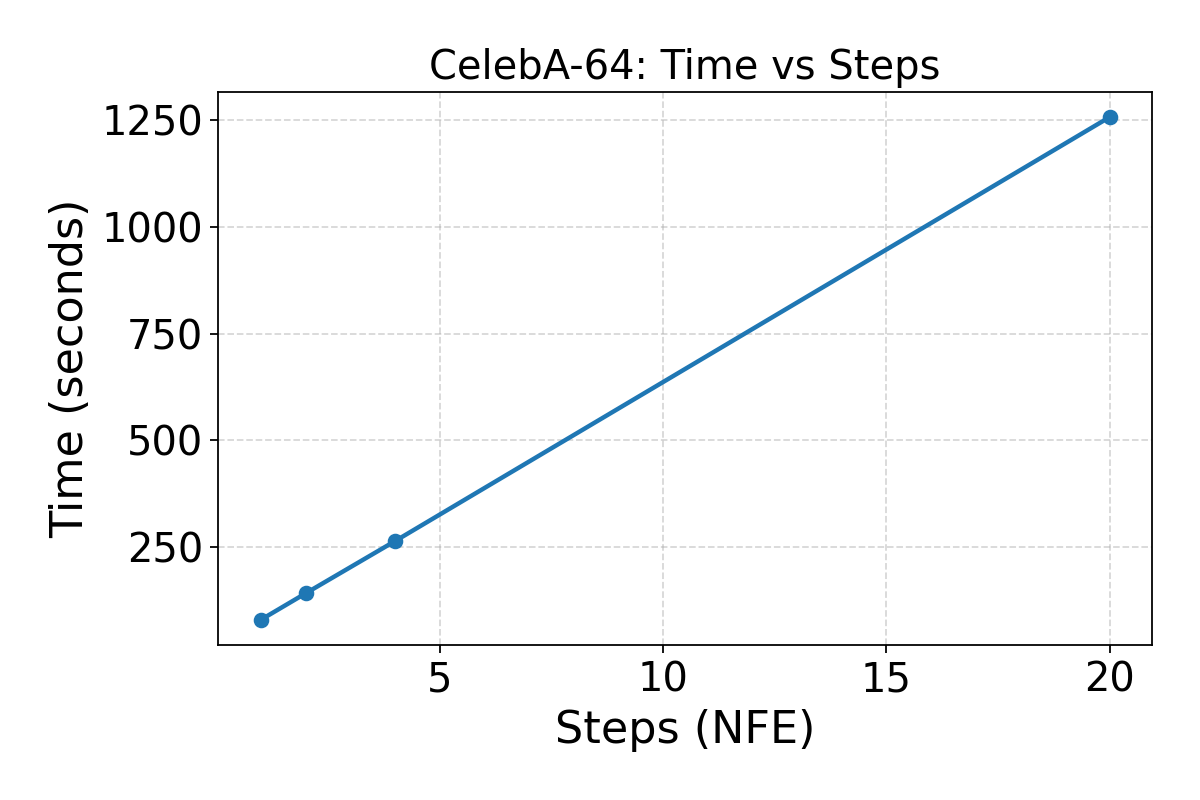}
  \vspace{-0.2cm}
  \caption{Time (seconds) to sample 50k images with one H100 GPU at 1, 2, 4, and 20 steps (NFE).}
  \label{fig:time_vs_steps}
\end{figure}

Analogously, we train an unconditional CelebA-64 Kac flow model. 
To distinguish it from DistillKac students, we still call it Guided Kac Flow teacher ($w = 0$). 
Starting from the teacher with 100 steps, we successively distill it to 20, 4, 2, and 1 steps. 
As the step count decreases, FID increases from 3.42 (20 NFE) to 7.45 (1 NFE). 
For comparison, the original Guided Kac Flow teacher yields 11.23 at 20 NFE and 443.01 at 1 NFE (See Appendix \ref{appendix:celeba_fid_vs_nfe}).
This indicates that distillation preserves image quality far better than the teacher in the few-step regime.

Multi-stage distillation $(S_1 \to S_2 \to \cdots \to S_\ell)$ can perform better than a single-stage distillation $(S_1 \to S_\ell)$ (See Appendix \ref{appendix:multi_stage_vs_single_stage}).
Multi-stage schedules (i) allow more hyperparameter tuning and (ii) reduce per-stage runtime, enabling more effective hyperparameter search.

\section{Related Work}
Progressive Distillation \citep{salimansprogressive} iteratively halves the sampling steps by training a student to reproduce the effect of two teacher steps in one step.
Consistency Models \citep{song2023consistency} learn a single network with a multi-time consistency objective so that a sample can be produced in 1 or 2 evaluations via the probability flow ODE.
Our endpoint-only distillation trains each student segment to match the teacher integrated over $N \geq 2$ substeps on that segment and provides an endpoint-to-trajectory stability guarantee under Kac dynamics.
Conceptually, our scheme is closer to Progressive Distillation and, like \citet{meng2023distillation}, we allow the teacher to employ classifier-free guidance.

Beyond pixel-space accelerated samplers, distillation of diffusion models in the latent space of a pre-trained autoencoder is widely used for large images.
\citet{meng2023distillation} distills classifier-free guided diffusion models into few-step latent samplers.
Latent Consistency Models \citep{luo2023latent} extend consistency models to the latent space of pre-trained latent diffusion models, such as Stable Diffusion.
Latent Adversarial Diffusion Distillation \citep{sauer2024fast} distills a large latent diffusion transformer by training a student in latent space using the teacher both as a generator of synthetic guided samples and as a multi-scale, noise-aware discriminator backbone, enabling stable adversarial one-step or few-step sampling without decoding to pixels.
Although we focus on pixel-space Kac backbones for image generation, our endpoint-distillation framework is representation-agnostic and could be applied equally well to latent Kac flows on top of a pre-trained VAE. 
In addition, our framework is architecture-agnostic, and can be applied not only to UNets but also to transformers and other architectures.

Our initial teacher, Guided Kac Flow, is a conditional Kac flow velocity model trained under hyperbolic, finite-speed Kac dynamics, i.e., a second-order system with an explicit velocity variable.
By contrast, flow matching \citep{lipmanflow} and rectified flow \citep{liu2022flow} learn first-order ODE velocity fields along chosen data–noise interpolants, with no independent velocity state or no wave-like propagation, and thus no built-in finite-speed constant.
Moreover, classical diffusion has infinite propagation speed, and in learned diffusion models the score/velocity field becomes ill-conditioned as $\sigma \to 0$ (at the end of the denoising process), with Lipschitz blow-ups observed and analyzed \citep{yang2023lipschitz, duong2025telegrapher}.

\section{Discussion and Future Work}
\begin{figure}[t]
  \centering
  \includegraphics[width=0.98\linewidth]{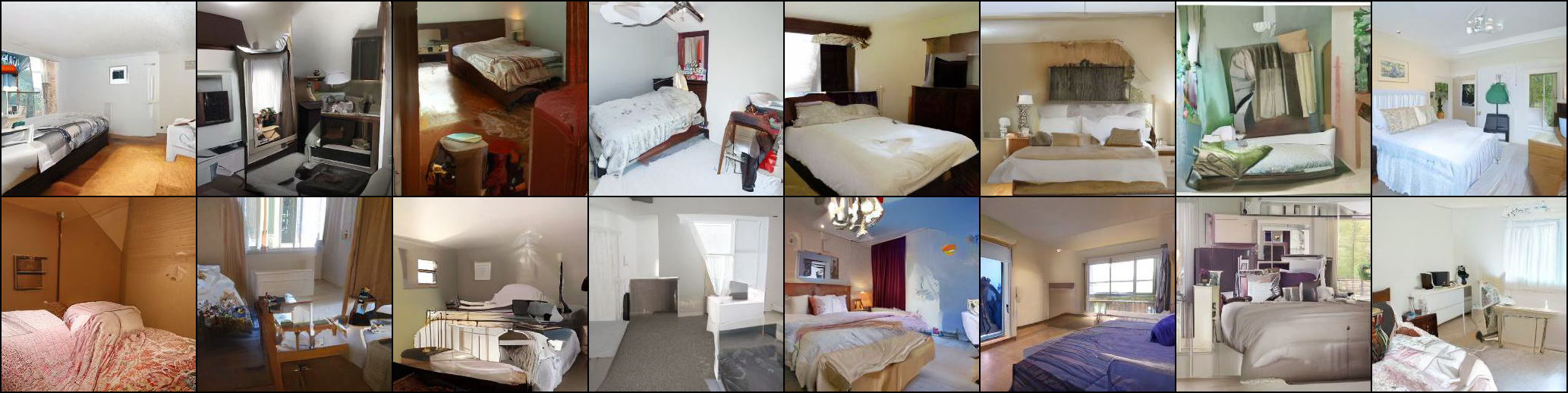}
  \caption{Uncurated LSUN Bedroom-256 samples generated by Guided Kac Flow (midpoint integrator, 100 steps, guidance $w = 0$).}
  \label{fig:lsun_100_step_midpoint}
\end{figure}

\begin{figure}[t]
  \centering
  \includegraphics[width=0.98\linewidth]{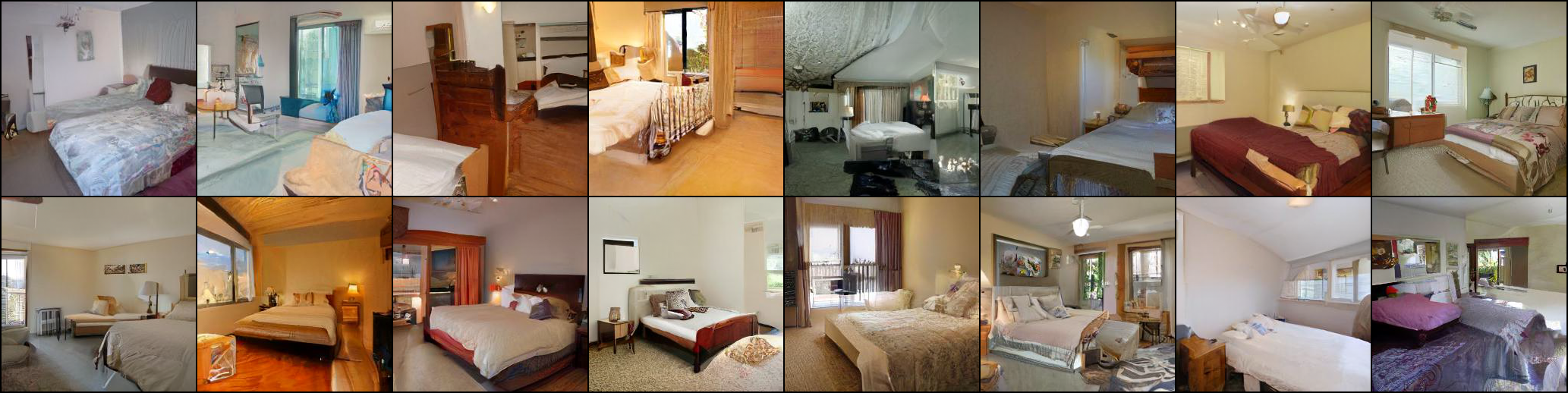}
  \caption{Uncurated LSUN Bedroom-256 samples generated by DistillKac (Euler integrator, 20 NFE).}
  \label{fig:lsun_20_step_euler}
\end{figure}

While our CIFAR-10 and CelebA-64 FIDs are below the state-of-the-art diffusion models, we do not make SOTA claims. 
Instead, our results show that Guided Kac Flow and DistillKac produce competitive, high-fidelity images at low NFE.
We expect further improvements from stronger backbones and additional tuning of the Kac flow hyperparameters. 
These are orthogonal to our core contribution.
Our goal is to highlight Kac flow based generative modeling as a credible alternative to diffusion and to encourage follow-up work in this direction.

Unlike diffusion models, whose full probability path satisfies the Fokker-Planck equation, Kac flows enjoy a 1D Feynman-Kac correspondence with the telegrapher equation, yielding bounded-speed probability flow.
As shown in \Cref{thm:state_distr_solves_eq}, for higher dimensions ($d > 2$), the solution to the damped wave equation need not coincide with a probability density for the Kac process (and may fail to be a valid density).
Consequently, the Kac flow paper \citep{duong2025telegrapher} adopts a componentwise multi-D process built from independent 1D Kac marginals, for which the continuity equation and Lipschitz and finite-speed properties hold.
This raises a natural open question: can one construct a genuinely $d$-dimensional stochastic process (with possibly dependent coordinates) whose probability path satisfies a hyperbolic PDE while preserving mass, perhaps under specific boundary conditions or coupling structures? 
Establishing such a model, along with stability guarantees and asymptotic connections to diffusion, would broaden the toolbox of finite-speed generative flows.

\bibliography{iclr2026_conference}
\bibliographystyle{iclr2026_conference}

\newpage 
\appendix

\numberwithin{equation}{section}                 % reset per appendix A, B, ...
\renewcommand{\theequation}{\thesection-\arabic{equation}}  % A-1, A-2, ...
\setcounter{table}{0}
\renewcommand{\thetable}{\thesection-\arabic{table}}
\setcounter{figure}{0}
\renewcommand{\thefigure}{\thesection-\arabic{figure}}

\newpage
\section*{LLM Usage}
\textbf{LLM use disclosure.} We used large language models to (i) improve the grammar and wording of this manuscript and (ii) assist with experimentation by drafting portions of the code and suggesting debugging steps. All scientific ideas, claims, experiment designs, and final implementations are by the authors. We reviewed and verified all LLM-assisted text and code before use, and the authors remain fully responsible for the content of this paper.

\section{Extra Proofs} \label{appendix:extra_proofs}
\subsection{Proof of Theorem \ref{thm:cfg}}
By triangle inequality, for a.e. $t$,
\begin{align}
   \| \tilde{v}(t, \cdot;y)\|_{L^2(\mu_t)} &\leq \|v_\theta(t, \cdot)\|_{L^2(\mu_t)} + |w(t)| \|\Delta_\theta(t, \cdot; y)\|_{L^2(\mu_t)} \label{proof:thm_cfg_step1}\\
   &\leq c\sqrt{d} + |w(t)| \|\Delta_\theta(t, \cdot; y)\|_{L^2(\mu_t)} \label{proof:thm_cfg_step2}\\
   &= c\sqrt{d} + |w(t)|\E_{x\sim \mu_t}[\|\Delta_\theta(t, x; y)\|^2] \label{proof:thm_cfg_step3}\\
   & < \infty \label{proof:thm_cfg_step4}
\end{align}
From \Cref{proof:thm_cfg_step1} to \Cref{proof:thm_cfg_step2}, we used the assumption that $\|v_\theta(t, \cdot)\|_{L^2(\mu_t)} \leq c \sqrt{d}$ for a.e. $t$.
From \Cref{proof:thm_cfg_step3} to \Cref{proof:thm_cfg_step4}, we used Assumption \ref{assumption:guidance_gap} and the assumption that $|w(t)|$ is finite for a.e. $t$.

Therefore, $\tilde{v}(t, \cdot;y) \in L^2(\mu_t)$ for a.e. $t$.

\subsection{Proof of Lemma \ref{lemma:lip_const_of_flow_maps}}
Let $x, z \in \R^d$ and $t\mapsto \Phi_{s \to t}(x), \Phi_{s \to t}(z)$ be teacher characteristics, i.e., the solution trajectories (integral curves) of the ODE defined by \Cref{eq:teacher_reverse_ode}. 
Let 
\begin{align}
    \Delta(t) = \Phi_{s \to t}(x) - \Phi_{s\to t}(z).
\end{align}
Then 
\begin{align}
    \frac{d}{dt} \Delta(t) = u(t, \Phi_{s \to t}(x); y) - u(t, \Phi_{s \to t}(z); y).
\end{align}

So
\begin{align}
    \frac{d}{dt} \| \Delta(t) \| &= \left<\frac{\Delta(t)}{\|\Delta(t)\|}, \frac{d}{dt} \Delta(t) \right>\\
    &\leq \left\lVert \frac{\Delta(t)}{\|\Delta(t)\|}\right\rVert \left\lVert \frac{d}{dt} \Delta(t)\right\rVert  \ \ \ \  \text{(by Cauchy–Schwarz inequality)} \label{eq:cauchy_for_delta}\\
    &= \| u(t, \Phi_{s \to t}(x); y) - u (t, \Phi_{s \to t}(z); y)\|\\
    &\leq L(t) \|\underbrace{\Phi_{s \to t}(x) - \Phi_{s \to t}(z)}_{\Delta(t)}\| \ \ \ \ \text{(by Assumption \ref{assumption:spatial_lip_drifts})}
\end{align}

By Gr\"{o}nwall's inequality,
\begin{align}
    \|\Phi_{s \to t}(x) - \Phi_{s\to t}(z)\| \leq \exp \left(\int_s^t L(r) \diff r \right) \| x - z \|,
\end{align}
hence
\begin{align}
    Lip(\Phi_{s \to t}) \leq \exp \left(\int_s^t L(r) \diff r \right).
\end{align}

\subsection{Proof of Lemma \ref{lemma:pushforward_contraction_and_coupling}}
Let $\pi$ be an optimal coupling between $\mu$ and $\nu$ for $W_2$. Then
\begin{align}
    W_2^2(F_\#\mu, F_\#\nu) &\leq \int \| F(x) - F(z) \|_2^2 \diff \pi(x, z) \ \ \ \ \text{(by definition of $W_2$)}\\
    &\leq Lip(F)^2 \int \| x - z \|_2^2 \diff \pi(x, z) \ \ \ \ \text{(by definition of $Lip(F)$)}\\
    &= Lip(F)^2 W_2^2(\mu, \nu) \ \ \ \ \text{(by definition of $W_2$)}.
\end{align}

Let $X \sim \nu$ and set $Y:= F(X), Z:= G(X)$.
Let $\gamma$ be the joint distribution of $(Y,Z)$, i.e., $\gamma := (F \times G)_\# \nu$.
Then $\gamma$ is a coupling of $F_\#\nu$ and $G_\#\nu$, i.e., $\gamma \in \Pi(F_\#\nu, G_\#\nu)$.

By the definition of pushforward,
\begin{align}
    \int_{\R^d\times \R^d} \| y - z \|^2 \diff \gamma(y, z) = \int_{\R^d} \| F(x) - G(x) \|^2 \diff \nu(x) = \E_{X\sim \nu} \| F(X) - G(X) \|^2
\end{align}

By the definition of $W_2$,
\begin{align}
    W_2^2(F_\#\nu, G_\#\nu) &= \inf_{\eta \in \Pi(F_\#\nu, G_\#\nu)} \int_{\R^d\times \R^d} \| y - z \|^2 \diff \eta(y, z)\\
    &\leq \int_{\R^d\times \R^d} \|y - z\|^2 \diff \gamma(y, z) \\
    &= \E_{X\sim \nu} \| F(X) - G(X) \|^2
\end{align}

\newpage 
\section{Additional Experimental Details} \label{appendix:extra_exp}

\subsection{Experiment Configurations} \label{appendix:exp_configs}
We use the UNet architecture from OpenAI's guided-diffusion github repository \citep{openai_guided_diffusion} to train models on CIFAR-10 and CelebA-64. 
Following \citet{duong2025telegrapher}, we define a mean-reverting Kac process by
\begin{align}
    M_t := f(t) X_0 + K_{g(t)}, t \in [0, 1],
\end{align}
where $X_0 \sim \mu_0$ denotes the data, $K_t$ is a Kac process started from 0, and $f, g:[0, 1] \to \R$ are smooth functions satisfying 
\begin{align}
    f(0) = 1, \quad f(1) = 0, \quad g(0) = 0, \quad g(1) = 1.
\end{align}
In particular, $M_0 = X_0$ and $M_1 = K_1$.
Thus $f$ and $g$ are hyperparameters of the mean-reverting Kac process. 
Let $a := \frac{1}{2} \xi$, where $\xi$ is the damping coefficient in the damped wave equation (\Cref{eq:damped_wave_eq}), and let $c$ denote the wave speed. 
The pair $(a, c)$ are additional hyperparameters of the mean-reverting Kac process.
\Cref{tab:unet_configs} summarizes the UNet, training, and Kac process hyperparameters.
We adopt early stopping when the validation FID curve flattens.

\begin{table}[H]
\centering
\small
\setlength{\tabcolsep}{8pt}
\renewcommand{\arraystretch}{1.15}
\begin{tabular}{l l c c c}
\toprule
\textbf{Section} & \textbf{Parameter} & \textbf{CIFAR-10} & \textbf{CelebA-64} &  \textbf{LSUN bedrooms-256} \\
\midrule
\multirow{9}{*}{\textbf{UNet}} 
 & Input resolution & $32{\times}32$ & $64{\times}64$ & $256{\times}256$ \\
 & Base channels & 256 & 192 & 256\\
 & Residual blocks / stage  & 3 & 3 & 2 \\
 & Dropout & 0.0 & 0.0 & 0.1\\
 & Head channels / attn & 64 & 64 & 64 \\
 & Channel multipliers  & [1, 2, 2, 3] & [1, 2, 3, 4] & [1, 1, 2, 2, 4, 4]\\
 & Attention resolutions & [16, 8, 4] & [32, 16] & [32, 16, 8]\\
 & Scale–shift norm & yes & yes & yes \\
 & Resblock up/down & yes & yes & yes \\
 & Use AMP & no & no & yes \\
\midrule
\multirow{12}{*}{\textbf{Training}} 
 & Optimizer & AdamW & AdamW & AdamW \\
 & Learning rate & $10^{-4}$ & $2\times 10^{-4}$ & $2\times 10^{-4}$ \\
 & Batch size & 128 & 128 & 64 \\
 & Weight decay & 0.01 & 0.01 & 0.01\\
 & LR schedule & Flat $\to$ Cosine & Flat $\to$ Cosine & Flat $\to$ Cosine \\
 & Warmup steps & 2\% & 2\% & 2\%\\
 & Flat steps & 58\% & 58\% & 58\%\\
 & Cosine steps & 40\% & 40\% & 40\%\\
 & Gradient clip & 1 & 1 & 1\\
 & EMA decay & 0.9999 & 0.9999 & 0.9999\\
 & Training steps & 300k--400k & 300k--400k & 300k--800k\\
 & Early stopping & yes & yes & yes \\
\midrule
\multirow{3}{*}{\textbf{Kac}}
 & $f(t)$ & $t$ & $t$ & $t$\\
 & $g(t)$ & $t$ & $t^2$ & $t^2$\\
 & $(a, c)$ & $(25, 2)$ & $(3000, 20)$ & $(3000, 20)$\\
\bottomrule
\end{tabular}
\caption{UNet, training, and Kac process hyperparameters for CIFAR-10 and CelebA-64.}
\label{tab:unet_configs}
\end{table}

We visually inspect 100 generated samples every 5k training steps, and compute FID using 2,000 samples every 10k steps.
For CIFAR-10, we trained on a single NVIDIA A100 or H100 GPU.
The model typically converged after approximately 100k steps ($\approx$ 15 hours), at which point the training curve plateaued and we can apply early stopping.
For CelebA-64, we trained on a single NVIDIA H100 GPU.
The training curve plateaued after around 150k steps ($\approx$ 30 hours) and full training to 400k steps required approximately 80 hours. 
For LSUN Bedroom, we trained on 4×H100 GPUs. 
The training curve plateaued after about 300k steps ($\approx$ 3 days), but continued to improve very slowly thereafter. 
Extending training to 400k--800k steps required approximately 4--8 days in total.

\subsection{Kac Flow Hyperparameters} \label{appendix:kac_flow_hyperparameters}
As detailed in Appendix \ref{appendix:exp_configs}, Kac flow hyperparameters comprise $a, c$ and the functions $f$ and $g$.
We first use a lightweight UNet to search over different values of $a, c$, estimating FID with 2k samples to efficiently identify promising settings.
We then adopt a deeper UNet described in \ref{appendix:exp_configs} and report final FID using 50k samples.
Following \citet{duong2025telegrapher}, we set $f(t) = t$, and consider $g(t) = t$ or $t^2$.

\begin{table}[H]
\centering\footnotesize
\begin{tabular}{lc}
\toprule
$g(t) = t$ & \\
\midrule
$(a,c)$ & FID \\
\midrule
$(25, 1)$ & 20.16 \\
$(25, 2)$ & 19.83 \\
$(25, 3)$ & 19.38 \\
$(25, 4)$ & 20.25 \\
$(100, 5)$ & 36.23 \\
$(900, 10)$ & 37.31 \\
\bottomrule
\end{tabular}
\quad \quad \quad 
\begin{tabular}{lc}
\toprule
$g(t) = t^2$ & \\
\midrule
$(a,c)$ & FID \\
\midrule
$(900, 10)$ & 16.42 \\
$(1500, 10)$ & 16.60 \\
$(1500, 12)$ & 15.83 \\
$(3000, 10)$ & 18.11 \\
$(3000, 20)$ & \textbf{15.30} \\
$(6000, 20)$ & 16.96 \\
\bottomrule
\end{tabular}
\caption{FID@2k samples with a lightweight UNet on CelebA-64}
\label{tab:kac_hyperparameters_fid_2k}
\end{table}

\Cref{tab:kac_hyperparameters_fid_2k} reports FID computed from 2k samples using a lightweight UNet on CelebA-64.
The set of hyperparameters $(a, c) = (3000, 20), f(t) = t, g(t) = t^2$ yields the lowest FID and can be adopted for subsequent experiments. 
The hyperparameters of the lightweight UNet are listed in \Cref{tab:lightweight_unet_hyperparameters}.

\begin{table}[H]
\centering\footnotesize
\begin{tabular}{lc}
\toprule
 Input resolution & $64{\times}64$ \\
 Base channels & 128 \\
 Residual blocks / stage  & 2 \\
 Dropout & 0.1 \\
 Head channels / attn & 64 \\
 Channel multipliers & [1, 2, 2, 2] \\
 Attention resolutions & [16] \\
 Scale–shift norm & no \\
 Resblock up/down & yes \\
\bottomrule
\end{tabular}
\caption{Lightweight UNet Hyperparameters}
\label{tab:lightweight_unet_hyperparameters}
\end{table}

\subsection{FID for Guided Kac Flow on CIFAR-10}
\Cref{tab:cifar_w_vs_fid} and \Cref{tab:cifar_steps_vs_fid} tabulate the FID values used to generate \Cref{fig:fid_vs_w_and_steps}.
AB-2 uses one function evaluation per step whereas midpoint uses two evaluations per step.

\begin{table}[H]
\centering\footnotesize
\begin{tabular}{lcccccc}
\toprule
$w$ & 1.0 & 1.1 & 1.2 & 1.3 & 1.4 & 1.5 \\
\midrule
midpoint & 4.20 & 3.73 & \textbf{3.54} & 3.56 & 3.72 & 4.10 \\
AB-2 & 4.32 & 3.80 & \textbf{3.58} & 3.68 & 3.85 & 4.21 \\
Euler & 4.52 & 4.18 & 4.00 & 4.00 & 4.21 & 4.47 \\
\bottomrule
\end{tabular}
\caption{FID for Guided Kac Flow (100 steps) on CIFAR-10}
\label{tab:cifar_w_vs_fid}
\end{table}

\begin{table}[H] 
\centering\footnotesize
\begin{tabular}{*{12}{c}}
\toprule
steps & 100 & 80 & 64 & 50 & 25 & 20 & 10 & 5 & 4 & 2 & 1 \\
\midrule 
midpoint & 3.54 & 3.55 & 3.59 & 3.60 & 3.87 & 4.39 & 13.03 & 49.14 & 70.36 & 202.37 & 396.46\\
AB-2 & 3.58 & 3.62 & 3.67 & 3.77 & 8.25 & 13.99 & 51.36 & 125.63 & 165.16 & 346.16 & 396.38\\
\bottomrule
\end{tabular}
\caption{FID vs. integration steps for Guided Kac Flow ($w = 1.2$) on CIFAR-10. }
\label{tab:cifar_steps_vs_fid}
\end{table}

\subsection{FID for Guided Kac Flow on CelebA-64}\label{appendix:celeba_fid_vs_nfe}
\Cref{tab:celeba_nfe_vs_fid} reports FID for the Guided Kac Flow teacher ($w = 0$) across various NFE using the midpoint and AB-2 integrators. 
AB-2 uses one function evaluation per step, whereas midpoint uses two evaluations per step.
Consequently, midpoint cannot be run at NFE $= 1$.
\Cref{fig:celeba_fid_vs_nfe} shows FID vs. NFE for both the Guided Kac Flow teacher ($w = 0$) and the DistillKac student. 
DistillKac substantially reduces FID at 20, 4, 2, and 1 steps relative to the Guided Kac Flow teacher $(w = 0)$.
\begin{table}[H] 
\centering\footnotesize
\begin{tabular}{*{12}{c}}
\toprule
NFE  & 100 & 20 & 4 & 2 & 1 \\
\midrule 
midpoint & 3.61 & 11.31 & 71.28 & 443.01 & \\
AB-2 & 3.50 & 11.23 & 56.73 & 190.03 & 443.01 \\
\bottomrule
\end{tabular}
\caption{FID for the Guided Kac Flow teacher ($w = 0$) on CelebA-64.}
\label{tab:celeba_nfe_vs_fid}
\end{table}

\begin{figure}[H]
  \centering
  \includegraphics[width=0.48\linewidth]{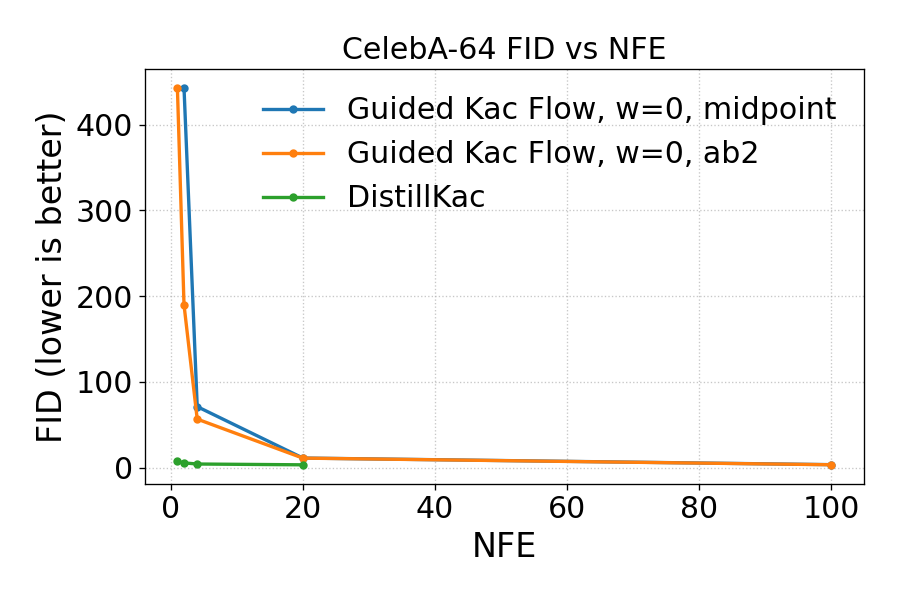}
  \caption{CelebA-64 FID vs. NFE. DistillKac substantially reduces FID at 20, 4, 2, and 1 steps relative to the Guided Kac Flow teacher $(w = 0)$.}
  \label{fig:celeba_fid_vs_nfe}
  \vspace{-0.5cm}
\end{figure}

\subsection{Multi-stage Distillation vs Single-stage Distillation} \label{appendix:multi_stage_vs_single_stage}
Table \ref{tab:fid_multi_stage_vs_single_stage} shows that multi-stage distillation $(S_1 \to S_2 \to \cdots \to S_\ell)$ performs better than a single-stage distillation $(S_1 \to S_\ell)$, where $S_i$ is the step count of the model.
In Table \ref{tab:fid_multi_stage_vs_single_stage}, $4 \to 2 \to 1$ means we start from a 4-step DistillKac and successively distill it to 2 and 1 steps, while $4 \to 1$ means we directly distill the same 4-step DistillKac to 1 step. 
Similarly, $100 \to 20 \to 4 \to 2 \to 1$ means we start from a Guided Kac Flow teacher with 100 steps and progressively distill it to 20, 4, 2, and 1 steps, while $100 \to 1$ means we directly distill the same Guided Kac Flow teacher to 1 step.
At the distillation stage $S_i \to S_{i + 1}$, the teacher substep is the quotient $S_i / S_{i + 1}$ (assuming $S_{i + 1} | S_{i}$), i.e., for each sample, the student advances one step while the teacher must simulate $S_i / S_{i + 1}$ steps. 
The runtime at this stage is proportional to $S_i / S_{i + 1}$.
Therefore, multi-stage distillation has a shorter per-stage runtime and makes hyperparameter search more tractable. 

\begin{table}[H] 
\centering\footnotesize
\begin{tabular}{*{12}{c}}
\toprule
$w$  & 1.0 & 1.1 & 1.2 & 1.3 & 1.4 & 1.5 \\
\midrule 
$4 \to 2 \to 1$ & 6.95 & 5.78 & \textbf{5.66} & 6.01 & 6.57 & 7.22 \\
$4 \to 1$ & 6.71 & 5.74 & 5.72 & 6.13 & 6.77 & 7.40 \\
\midrule
$100 \to 20 \to 4 \to 2 \to 1$ & 6.95 & 5.78 & \textbf{5.66} & 6.01 & 6.57 & 7.22 \\
$100 \to 1$ & 13.06 & 10.83 & 9.71 & 9.34 & 9.40 & 9.67 \\
\bottomrule
\end{tabular}
\caption{FID for multi-stage distillation and single-stage distillation.}
\label{tab:fid_multi_stage_vs_single_stage}
\end{table}

\subsection{Empirical Evidence for Endpoint-to-trajectory Stability}
We compute the mean squared discrepancy between the Guided Kac Flow teacher $(w = 0)$ and DistillKac students with $M \in \{20, 4, 2, 1\}$ steps (See \Cref{fig:trajectory_mismatch_delta_vs_time}). 
For each DistillKac student with $M$ steps, we start from the same initial noise and run both the teacher and the student.
We then evaluate the teacher at the $M$ student time steps $\{t_k\}_{k = 1}^M$ and compute the mean, over $n$ samples, of $\|x_k - \hat{x}_k\|^2$ across these time steps, where $x_k$ and $\hat{x}_k$ denote the teacher and student states at time step $t_k$.
As shown in \Cref{fig:trajectory_mismatch_delta_vs_time}, the mean squared teacher–student discrepancy grows as $t$ approaches 0, and the 20-step DistillKac student is consistently closer to the teacher than the 4-, 2-, and 1-step DistillKac students.
This is consistent with the endpoint-to-trajectory stability bound in Theorem \ref{thm:endpoint_to_trajectory_stability}: once the student matches the teacher at segment endpoints, the trajectory mismatch remains controlled throughout the interval.

To further quantify this effect, we plot, over many random initial noises, the mean trajectory discrepancy versus the endpoint discrepancy (See \Cref{fig:endpoint_to_trajectory_stability}). 
For each sample, the mean trajectory discrepancy is computed as the average of $\|x_k - \hat{x}_k\|$ over all teacher time steps $\{t_k\}_{k = 1}^K$, and the endpoint discrepancy is $\|x_K - \hat{x}_K\|$ at the final time step $t_K$.
The scattered points cluster tightly around a straight line, indicating an approximately linear relationship between trajectory and endpoint errors.
Empirically, endpoint mismatch almost completely explains the observed trajectory discrepancy, supporting our endpoint-only distillation scheme.

\begin{figure}[H]
\centering
\includegraphics[width=0.6\linewidth]{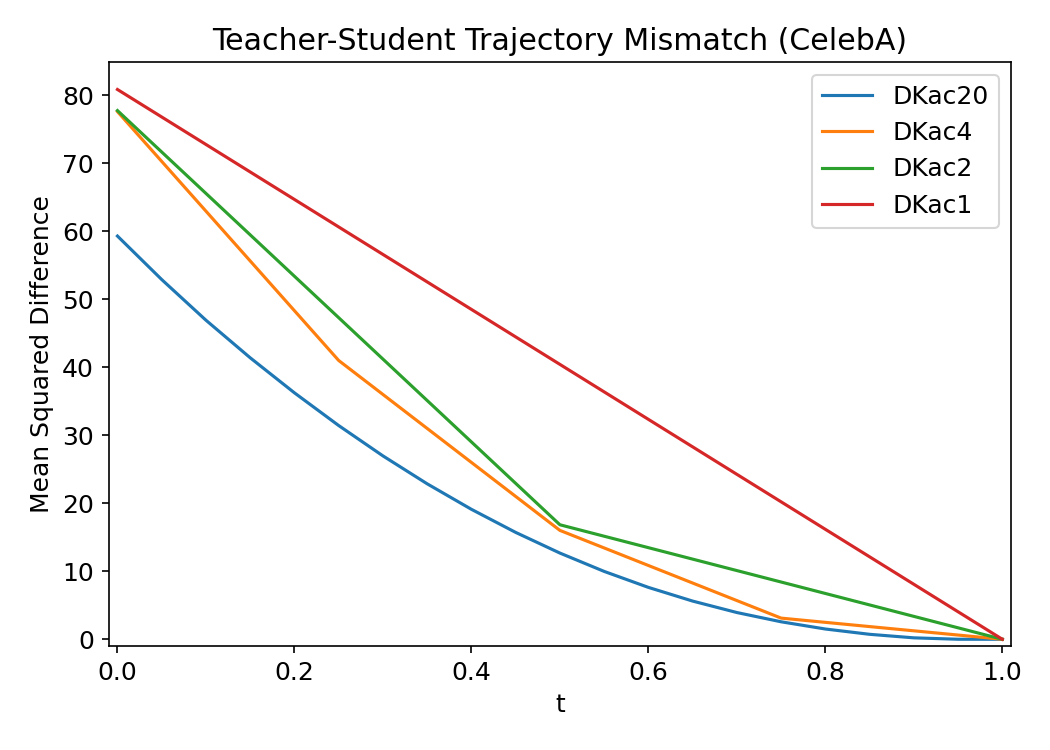}
\caption{Mean squared trajectory mismatch between the Guided Kac Flow teacher $(w = 0)$ and DistillKac students with 20, 4, 2, and 1 steps. Here, DKac20, DKac4, DKac2, and DKac1 denote DistillKac students with 20, 4, 2, and 1 steps, respectively.}
\label{fig:trajectory_mismatch_delta_vs_time}
\end{figure}

\begin{figure}[H]
\centering
\includegraphics[width=0.48\linewidth]{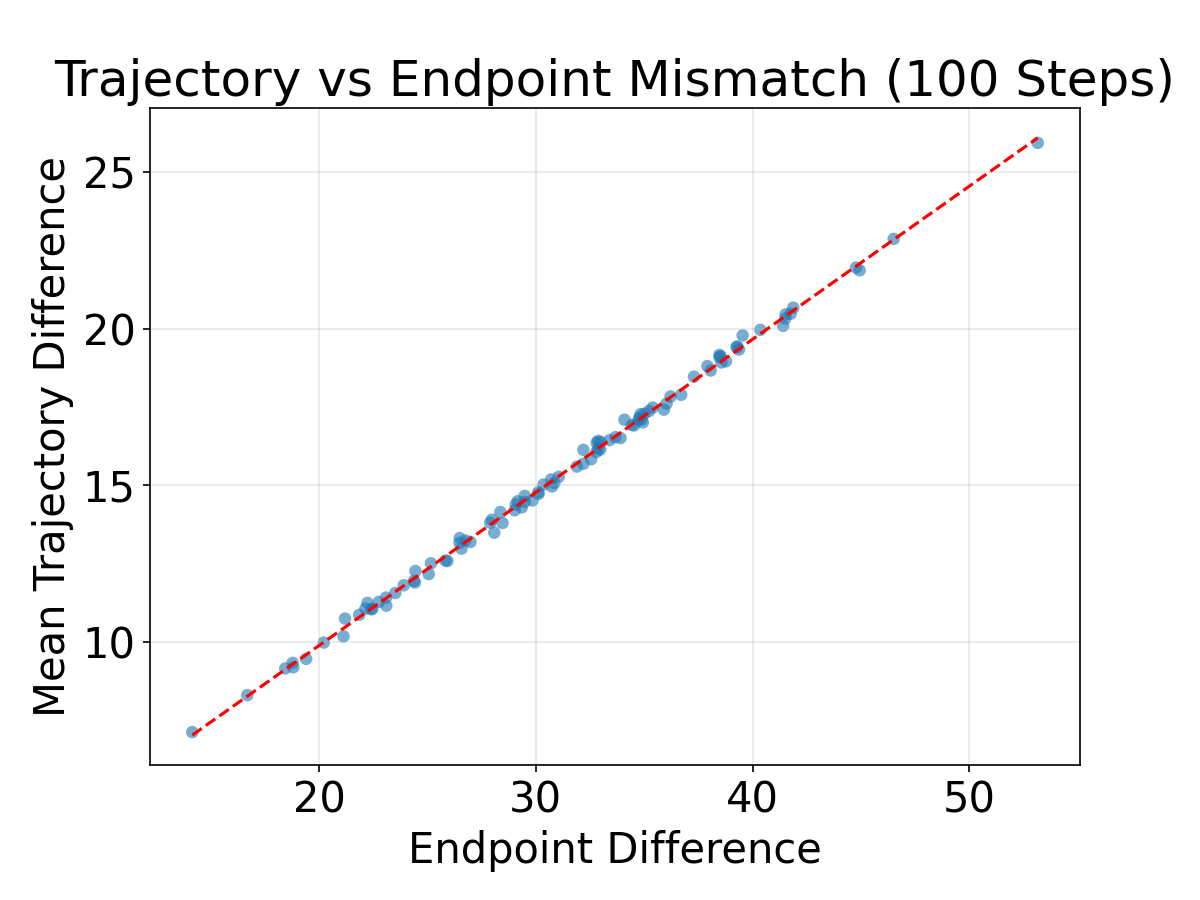}
\includegraphics[width=0.48\linewidth]{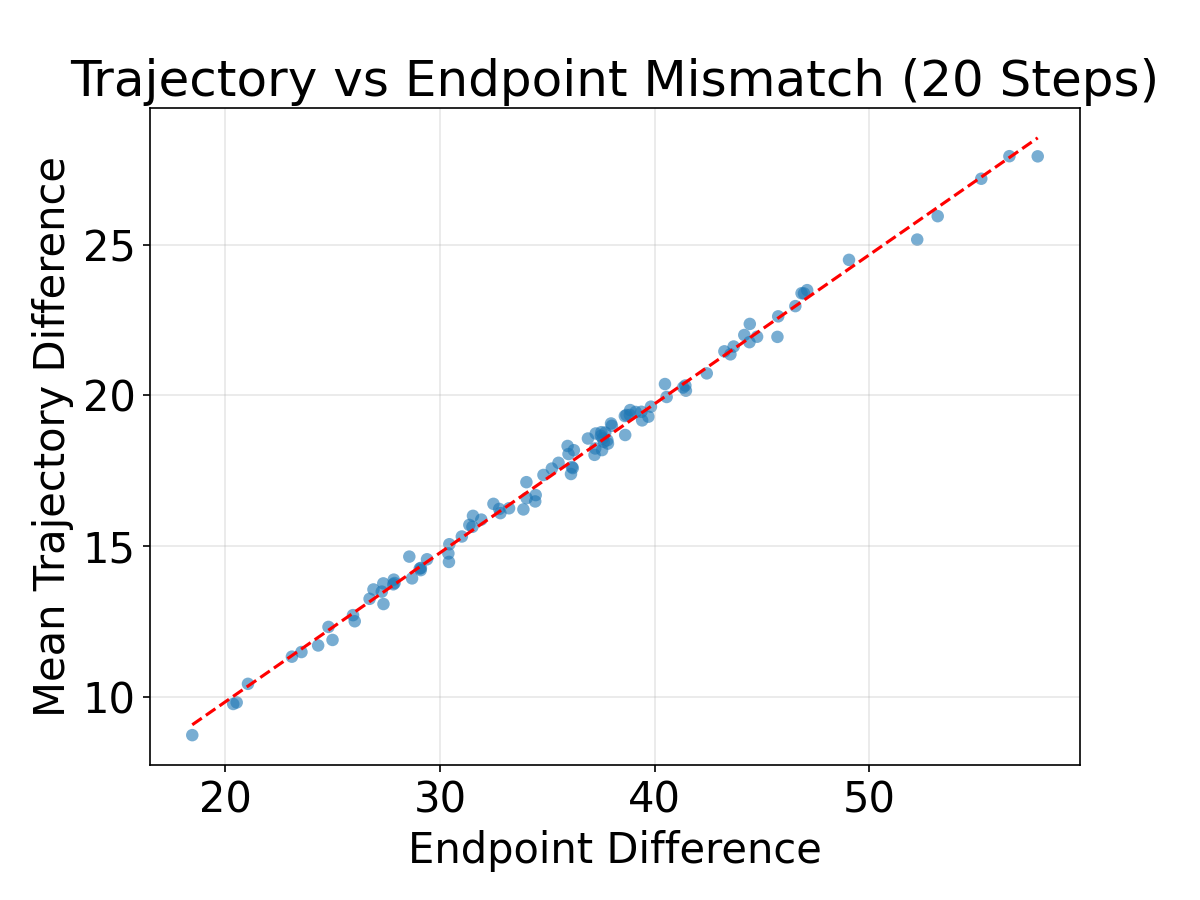}
\caption{Mean trajectory discrepancy versus endpoint discrepancy over 100 samples. Each blue dot corresponds to one sample, and we fit a straight line to the scattered points. \textbf{Left:} Comparison between the Guided Kac Flow teacher $(w = 0)$ and the DistillKac student with 20 steps. The mean trajectory discrepancy is computed over 100 teacher steps. The points have Pearson correlation 0.9992 and linear regression $R^2 = 0.9983$. \textbf{Right:} Comparison between DistillKac students with 20 and 4 steps. The mean trajectory discrepancy is computed over 20 teacher steps. The points have Pearson correlation 0.9976 and linear regression $R^2 = 0.9952$.}
\label{fig:endpoint_to_trajectory_stability}
\end{figure}

\newpage 
\subsection{More Generated Images}

\begin{figure}[H]
  \centering
  \includegraphics[width=0.8\linewidth]{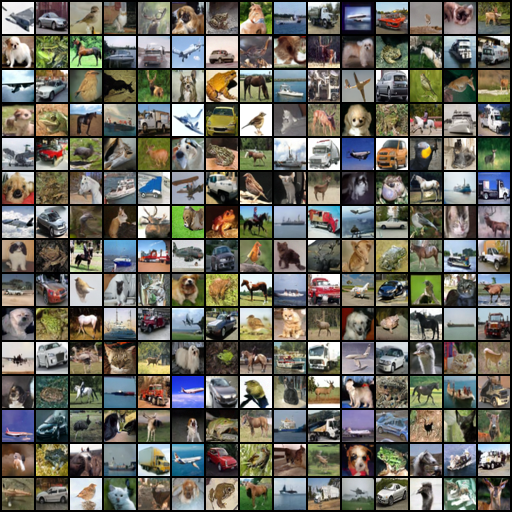}
  \caption{Uncurated CIFAR-10 samples generated by DistillKac (Euler integrator, 20 NFE).}
  \label{fig:cifar_student_seg20}
\end{figure}

\begin{figure}[H]
  \centering
  \includegraphics[height=4.2cm]{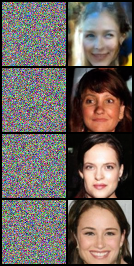} \quad \quad \quad \quad
  \includegraphics[height=4.2cm]{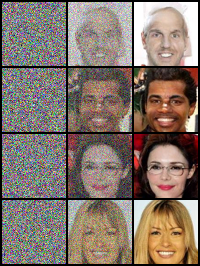} \quad \quad \quad\quad 
  \includegraphics[height=4.2cm]{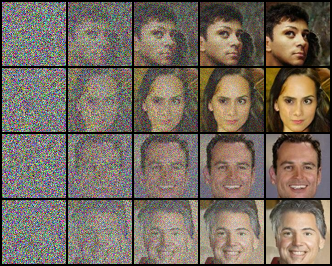}

  \caption{Uncurated CelebA-64 generations with 1, 2, and 4 steps (NFE).}
  \label{fig:1_2_4_step_samples}
\end{figure}

\newpage
\begin{figure}[H]
  \centering
  \includegraphics[width=\linewidth]{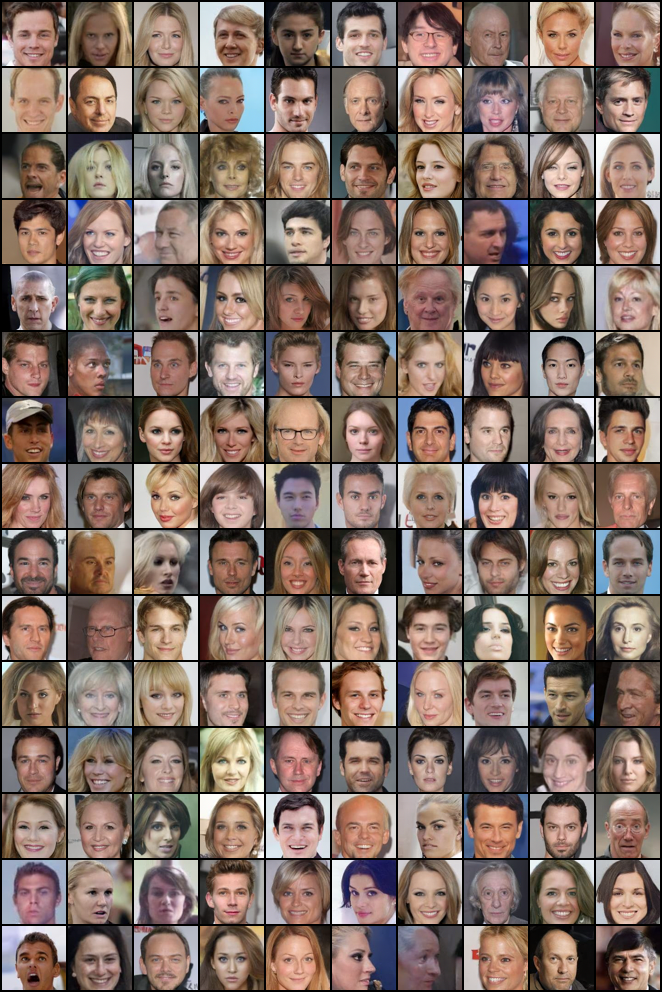}
  \caption{Uncurated CelebA-64 samples generated by DistillKac (AB-2 integrator, 20 NFE).}
  \label{fig:celeba_student_seg20}
  \vspace{-0.5cm}
\end{figure}

\newpage 
\begin{figure}[H]
  \centering
  \includegraphics[width=\linewidth]{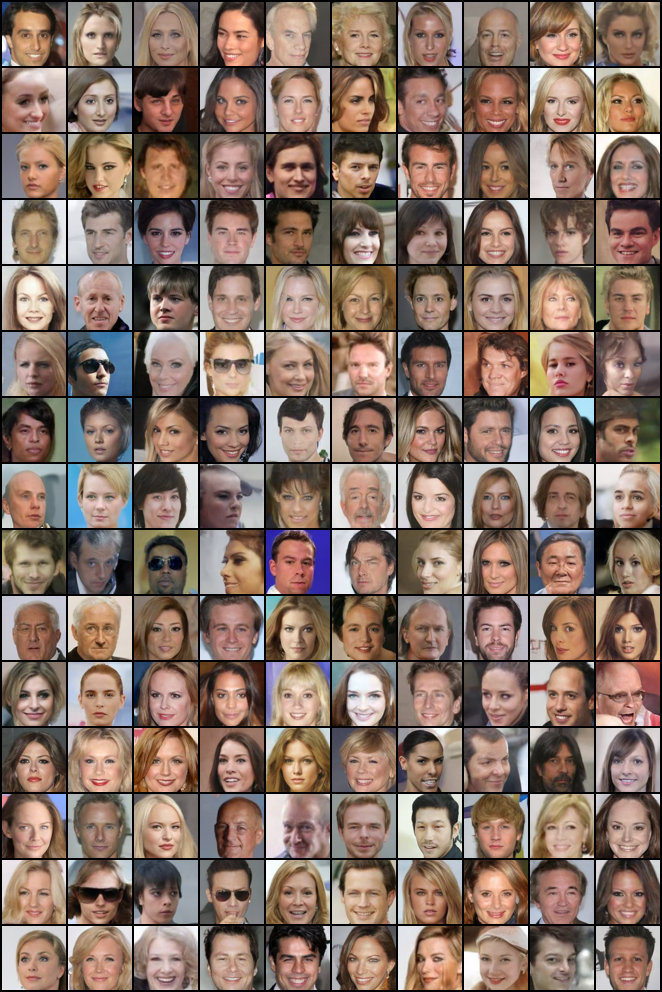}
  \caption{Uncurated CelebA-64 samples generated by DistillKac (Euler integrator, 4 NFE).}
  \label{fig:celeba_student_seg4}
  \vspace{-0.5cm}
\end{figure}

\end{document}